\def\1{\bm{1}}
\DeclareMathAlphabet{\mathsfit}{\encodingdefault}{\sfdefault}{m}{sl}
\SetMathAlphabet{\mathsfit}{bold}{\encodingdefault}{\sfdefault}{bx}{n}
\newtheorem{proposition}{Proposition}
\newcommand{\mine}{I_\texttt{NWJ}}
\newcommand{\infonce}{I_\texttt{NCE}}
\newcommand{\iest}{I_\texttt{EST}}
\newcommand{\inwj}{I_\texttt{NWJ}}
\title{On Mutual Information Maximization for Representation Learning}
\author{Michael Tschannen\thanks{Equal contribution. Correspondence to Michael Tschannen (tschannen@google.com), Josip Djolonga (josipd@google.com), and Mario Lucic (lucic@google.com). $^\dagger$PhD student at University of Cambridge and the Max Planck Institute for Intelligent Systems, Tübingen.
}
\quad
Josip Djolonga\footnotemark[1]
\quad
Paul K. Rubenstein$^\dagger$
\quad
Sylvain Gelly
\quad
Mario Lucic
\\
\normalfont{Google Research, Brain Team}
}
\begin{document}
\maketitle
\vspace{-2mm}
\begin{abstract}
Many recent methods for unsupervised or self-supervised representation learning train feature extractors by maximizing an estimate of the mutual information (MI) between different views of the data.
This comes with several immediate problems: For example, MI is notoriously hard to estimate,
and using it as an objective for representation learning may lead to highly entangled representations due to its invariance under arbitrary invertible transformations.
Nevertheless, these methods have been repeatedly shown to excel in practice.
In this paper we argue, and provide empirical evidence, that the success of these methods cannot be attributed to the properties of MI alone, and that they strongly depend on the inductive bias in both the choice of feature extractor architectures and the parametrization of the employed MI estimators. Finally, we establish a connection to deep metric learning and argue that this interpretation may be a plausible explanation for the success of the recently introduced methods.

\end{abstract}

\section{Introduction}

Unsupervised representation learning is a fundamental problem in machine learning.
Intuitively, one aims to learn a function $g$ which maps the data into some, usually lower-dimensional, space where one can solve some (generally a priori unknown) target supervised tasks more efficiently, i.e.\ with fewer labels.
In contrast to supervised and semi-supervised learning, the learner has access \emph{only to unlabeled data}.
Even though the task seems ill-posed as there is no natural objective one should optimize, by leveraging domain knowledge this approach can be successfully applied to a variety of problem areas, including image~\citep{kolesnikov2019revisiting, oord2018representation, henaff2019data, tian2019contrastive, hjelm2018learning, bachman2019learning} and video classification~\citep{wang2015unsupervised, sun2019contrastive}, and natural language understanding~\citep{oord2018representation, peters2018deep, devlin2018bert}.

Recently, there has been a revival of approaches inspired by the \emph{InfoMax principle}~\citep{linsker1988self}: Choose a representation $g(x)$ maximizing the mutual information (MI) between the input and its representation, possibly subject to some structural constraints.
MI measures the amount of information obtained about a random variable $X$ by observing some other random variable $Y$\footnote{We denote random variables using upper-case letters (e.g. $X$, $Y$), and their realizations by the corresponding lower-case letter (e.g. $x$, $y$).}
Formally, the MI between $X$ and $Y$, with joint density $p(x,y)$ and marginal densities $p(x)$ and $p(y)$, is defined as the Kullback–Leibler (KL) divergence between the joint and the product of the marginals
\begin{align}\label{eq:kldiv}
I(X ; Y)= D_{\textrm{KL}}\left(p(x, y) \,\|\, p(x)p(y) \right) = \mathbb{E}_{p(x, y)} \left[\log \frac{p(x, y)}{p(x)p(y)} \right].
\end{align}
The fundamental properties of MI are well understood and have been extensively studied (see e.g.\ \citet{kraskov2004estimating}).
Firstly, MI is invariant under reparametrization of the variables --- namely, if $X' = f_1(X)$ and $Y' = f_2(Y)$ are homeomorphisms (i.e.\ smooth invertible maps), then $I(X; Y) = I(X'; Y')$.
Secondly, estimating MI in high-dimensional spaces is a notoriously difficult task, and in practice one often maximizes a tractable lower bound on this quantity~\citep{pmlr-v97-poole19a}.
Nonetheless, any distribution-free high-confidence lower bound on entropy requires a sample
size exponential in the size of the bound~\citep{mcallester2018formal}. 

Despite these fundamental challenges, several recent works have demonstrated promising empirical results in representation learning using MI maximization~\citep{oord2018representation, henaff2019data, tian2019contrastive, hjelm2018learning, bachman2019learning, sun2019contrastive}. In this work we argue, and provide empirical evidence, that the success of these methods cannot be attributed to the properties of MI alone.
In fact, we show that maximizing tighter bounds on MI can result in worse representations.
In addition, we establish a connection to deep metric learning and argue that this interpretation may be a plausible explanation of the success of the recently introduced methods.\footnote{The code for running the experiments and visualizing the results is available at \href{https://github.com/google-research/google-research/tree/master/mutual_information_representation_learning}{https://github.com/google-research/google-research/tree/master/mutual\_information\_representation\_learning}.} 

\section{Background and Related Work} 

\textbf{Recent progress and the InfoMax principle}\quad While promising results in other domains have been presented in the literature, we will focus on unsupervised image representation learning techniques that have achieved state-of-the-art performance on image classification tasks~\citep{henaff2019data, tian2019contrastive, bachman2019learning}.
The usual problem setup dates back at least to \citet{becker1992self} and can conceptually be described as follows: For a given image $X$, let $X^{(1)}$ and $X^{(2)}$ be different, possibly overlapping \emph{views} of $X$, for instance the top and bottom halves of the image.
These are encoded using encoders $g_1$ and $g_2$ respectively, and the MI between the two representations $g_1(X^{(1)})$ and $g_2(X^{(2)})$ is maximized, 
\begin{equation}
\max_{g_1 \in \mathcal{G}_1, g_2 \in \mathcal{G}_2} \quad \iest\left(g_1(X^{(1)}); g_2(X^{(2)})\right), \label{eq:newinfomax}
\end{equation}
where $\iest (X ; Y)$ is a sample-based estimator of the true MI $I(X ; Y)$ and the function classes $\mathcal{G}_1$ and $\mathcal{G}_2$ can be used to specify structural constraints on the encoders. While not explicitly reflected in~\eqref{eq:newinfomax}, note that $g_1$ and $g_2$ can often share parameters.
Furthermore, it can be shown that $I(g_1(X^{(1)}); g_2(X^{(2)})) \leq I(X; g_1(X^{(1)}), g_2(X^{(2)}))$,\footnote{Follows from the data processing inequality (see Prop.\ \ref{prop:newinfomax} in Appendix~\ref{app:newinfomax}). 
} hence the objective in \eqref{eq:newinfomax} can be seen as a lower bound on the InfoMax objective $\max_{g \in \mathcal{G}} I(X; g(X))$ \citep{linsker1988self}.

\textbf{Practical advantages of multi-view formulations}\quad
There are two main advantages in using \eqref{eq:newinfomax} rather than the original InfoMax objective.
First, the MI has to be estimated only between the learned representations of the two views, which typically lie on a much lower-dimensional space than the one where the original data $X$ lives.
Second, it gives us plenty of modeling flexibility, as the two views can be chosen to capture completely different aspects and modalities of the data, for example:
\begin{enumerate}[itemsep=0pt,topsep=0pt, leftmargin=20pt]
\item In the basic form of \emph{DeepInfoMax}~\citep{hjelm2018learning} $g_1$ extracts global features from the entire image $X^{(1)}$ and $g_2$ local features from image patches $X^{(2)}$, where $g_1$ and $g_2$ correspond to activations in different layers of the same convolutional network. \citet{bachman2019learning} build on this and compute the two views from different augmentations of the same image.
\item \emph{Contrastive multiview coding} (CMC)~\citep{tian2019contrastive} generalizes the objective in \eqref{eq:newinfomax} to consider multiple views $X^{(i)}$, where each $X^{(i)}$ corresponds to a different image modality (e.g., different color channels, or the image and its segmentation mask). 
\item \emph{Contrastive predictive coding} (CPC) \citep{oord2018representation, henaff2019data} incorporates a sequential component of the data. Concretely, one extracts a sequence of patches from an image in some fixed order, maps each patch using an encoder, aggregates the resulting features of the first $t$ patches into a context vector, and maximizes the MI between the context and features extracted from the patch at position $t+k$. In \eqref{eq:newinfomax}, $X^{(1)}$ would thus correspond to the first $t$ patches and $X^{(2)}$ to the patch at location $t+k$.
\end{enumerate}
Other approaches, such as those presented by \citet{sermanet2018time}, \citet{hu2017learning}, and \citet{ji2018invariant}, can be similarly subsumed under the same objective.

\textbf{Lower bounds on MI}\label{sec:MILB}\quad 
As evident from \eqref{eq:newinfomax}, another critical choice is the MI estimator $\iest$.
Given the fundamental limitations of MI estimation~\citep{mcallester2018formal}, recent work has focused on deriving lower bounds on MI~\citep{barber2003algorithm,belghazi2018mine,pmlr-v97-poole19a}.
Intuitively, these bounds are based on the following idea: If a classifier can accurately distinguish between samples drawn from the joint $p(x,y)$ and those drawn from the product of marginals $p(x)p(y)$, then $X$ and $Y$ have a high MI.

We will focus on two such estimators, which are most commonly used in the representation learning literature.
The first of them, termed \emph{InfoNCE}~\citep{oord2018representation}, is defined as
\begin{align}\label{eqn:infonce}
    I(X ; Y) \geq \mathbb{E}\left[\frac{1}{K} \sum_{i=1}^{K} \log \frac{e^{f\left(x_{i}, y_{i}\right)}}{\frac{1}{K} \sum_{j=1}^{K} e^{f\left(x_{i}, y_{j}\right)}}\right] \triangleq \infonce(X;Y),
\end{align}
where the expectation is over $K$ independent samples $\{(x_i, y_i)\}_{i=1}^K$ from the joint distribution $p(x, y)$~ \citep{pmlr-v97-poole19a}. 
In practice we estimate \eqref{eqn:infonce} using Monte Carlo estimation by averaging over multiple batches of samples.
Intuitively, the \emph{critic} function $f$ tries to predict for each $x_i$ which of the $K$ samples $y_1, \ldots, y_k$ it was jointly drawn with, by assigning high values to the jointly drawn pair, and low values to all other pairs.
The second estimator is based on the variational form of the KL divergence due to \emph{Nguyen, Wainwright, and Jordan (NWJ)} \citep{nguyen2010estimating} and takes the form
\begin{align}\label{eqn:iwj}
    I(X ; Y) \geq \mathbb{E}_{p(x, y)}[f(x, y)]-e^{-1} \mathbb{E}_{p(x)}[\mathbb{E}_{p(y)} e^{f(x,y)}] \triangleq \inwj(X;Y).
\end{align}
For detailed derivations we refer the reader to \citep{ruderman2012tighter,pmlr-v97-poole19a}.
Note that these bounds hold for any critic $f$ and when used in \eqref{eq:newinfomax} one in practice jointly maximizes over $g_1, g_2$ and $f$. 
\looseness-1 Furthermore, it can be shown that \eqref{eqn:infonce} is maximized by $f^*(x,y)=\log p(y|x)$ and \eqref{eqn:iwj} by $f^*(x,y)= 1 + \log p(y|x)$ \citep{pmlr-v97-poole19a}.
Common choices for $f$ include bilinear critics $f(x,y) = x^\top W y$ \citep{oord2018representation, henaff2019data, tian2019contrastive}, separable critics $f(x,y) = \phi_1(x)^\top \phi_2(y)$ \citep{bachman2019learning}, and concatenated critics $f(x,y) = \phi([x,y])$ \citep{hjelm2018learning} (here $\phi, \phi_1, \phi_2$ are typically shallow multi-layer perceptrons (MLPs)). When applying these estimators to solve \eqref{eq:newinfomax}, the line between the critic and the encoders $g_1, g_2$ can be blurry. For example, one can train with an inner product critic $f(x,y) = x^\top y$, but extract features from an intermediate layer of $g_1, g_2$, in which case the top layers of $g_1,g_2$ form a separable critic. Nevertheless, this boundary is crucial for the interplay between MI estimation and the interpretation of the learned representations.

\section{Biases in approximate information maximization}
It is folklore knowledge that maximizing MI does not necessarily lead to useful representations.
Already \cite{linsker1988self} talks in his seminal work about constraints, while a manifestation of the problem in clustering approaches using MI criteria has been brought up by \cite{bridle1992unsupervised} and subsequently addressed using regularization by \cite{krause2010discriminative}.
To what can we then attribute the recent success of methods building on the principles of MI maximization?
We will argue that their connection to the \emph{InfoMax} principle might be very loose.
Namely, we will show that they behave counter-intuitively if one equates them with MI maximization, and that the performance of these methods depends strongly on the bias that is encoded not only in the encoders, but also on the actual form of the used estimators.

\begin{enumerate}[itemsep=0pt,topsep=0pt, leftmargin=20pt]
\item We first consider encoders which are \emph{bijective} by design. Even though the true MI is maximized for any choice of model parameters, the representation quality (measured by downstream linear classification accuracy) improves during training. Furthermore, there exist invertible encoders for which the representation quality is \emph{worse} than using raw pixels, despite also maximizing MI.
\item We next consider encoders that can model both invertible and non-invertible functions. When the encoder can be non-invertible, but is initialized to be invertible, $\iest$ still biases the encoders to be very ill-conditioned and hard to invert.
\item For $\infonce$ and $\inwj$, higher-capacity critics admit tighter bounds on MI. We demonstrate that simple critics yielding loose bounds can lead to better representations than high-capacity critics.
\item Finally, we optimize the estimators to the same MI lower-bound value with different encoder architectures and show that the representation quality can be impacted more by the choice of the architecture, than the estimator. 
\end{enumerate}
As a consequence, we argue that the success of these methods and the way they are instantiated in practice is only loosely connected to MI. Then, in Section~\ref{sec:triplet} we provide an alternative explanation for the success of recent methods through a connection to classic triplet losses from metric learning.

\textbf{Setup}\quad 
Our goal is to provide a minimal set of easily reproducible empirical experiments to understand the role of MI estimators, critic and encoder architectures when learning representations via the objective \eqref{eq:newinfomax}.  
To this end, we consider a simple setup of learning a representation of the top half of MNIST handwritten digit images (we present results for the experiments from Sections~\ref{sec:criticarch} and ~\ref{sec:encoderarch} on CIFAR10 in Appendix~\ref{app:cifar10results}; the conclusions are analogous).
This setup has been used in the context of deep canonical correlation analysis \citep{andrew2013deep}, where the target is to maximize the correlation between the representations.
Following the widely adopted \emph{downstream linear evaluation protocol} \citep{kolesnikov2019revisiting, oord2018representation, henaff2019data, tian2019contrastive, hjelm2018learning, bachman2019learning}, we train a linear classifier\footnote{Using SAGA \citep{defazio2014saga}, as implemented in \texttt{scikit-learn} \citep{scikit-learn}.} for digit classification on the learned representation using all available training labels (other evaluation protocols are discussed in Section~\ref{sec:conclusion}).
To learn the representation we instantiate \eqref{eq:newinfomax} and split each input MNIST image $x \in [0, 1]^{784}$ into two parts, the top part of the image $x_\text{top} \in [0, 1]^{392}$ corresponding to $X^{(1)}$, and the bottom part, $x_\text{bottom} \in [0, 1]^{392}$, corresponding to $X^{(2)}$, respectively.
We train $g_1$, $g_2$, and $f$ using the Adam optimizer \citep{kingma2014adam}, and use $g_1(x_\text{top})$ as the representation for the linear evaluation.
Unless stated otherwise, we use a bilinear critic $f(x,y) = x^\top W y$ (we investigate its effect in a separate ablation study), set the batch size to $128$ and the learning rate to $10^{-4}$.\footnote{Note that $\infonce$ is upper-bounded by $\log(\text{batch size}) \approx 4.85$ \citep{oord2018representation}. We experimented with batch sizes up to $512$ and obtained consistent results aligned with the stated conclusions.} Throughout, $\iest$ values and downstream classification accuracies are  averaged over $20$ runs and reported on the testing set (we did not observe large gaps between the training and testing values of $\iest$). As a common baseline, we rely on a linear classifier in pixel space on $x_\text{top}$, which obtains a testing accuracy of about $85\%$. For comparison, a simple MLP or ConvNet architecture achieves about $94\%$ (see Section~\ref{sec:encoderarch} for details).

\begin{figure}[t]
    \centering
    \begin{subfigure}{0.32\textwidth}
    \includegraphics[width=\textwidth]{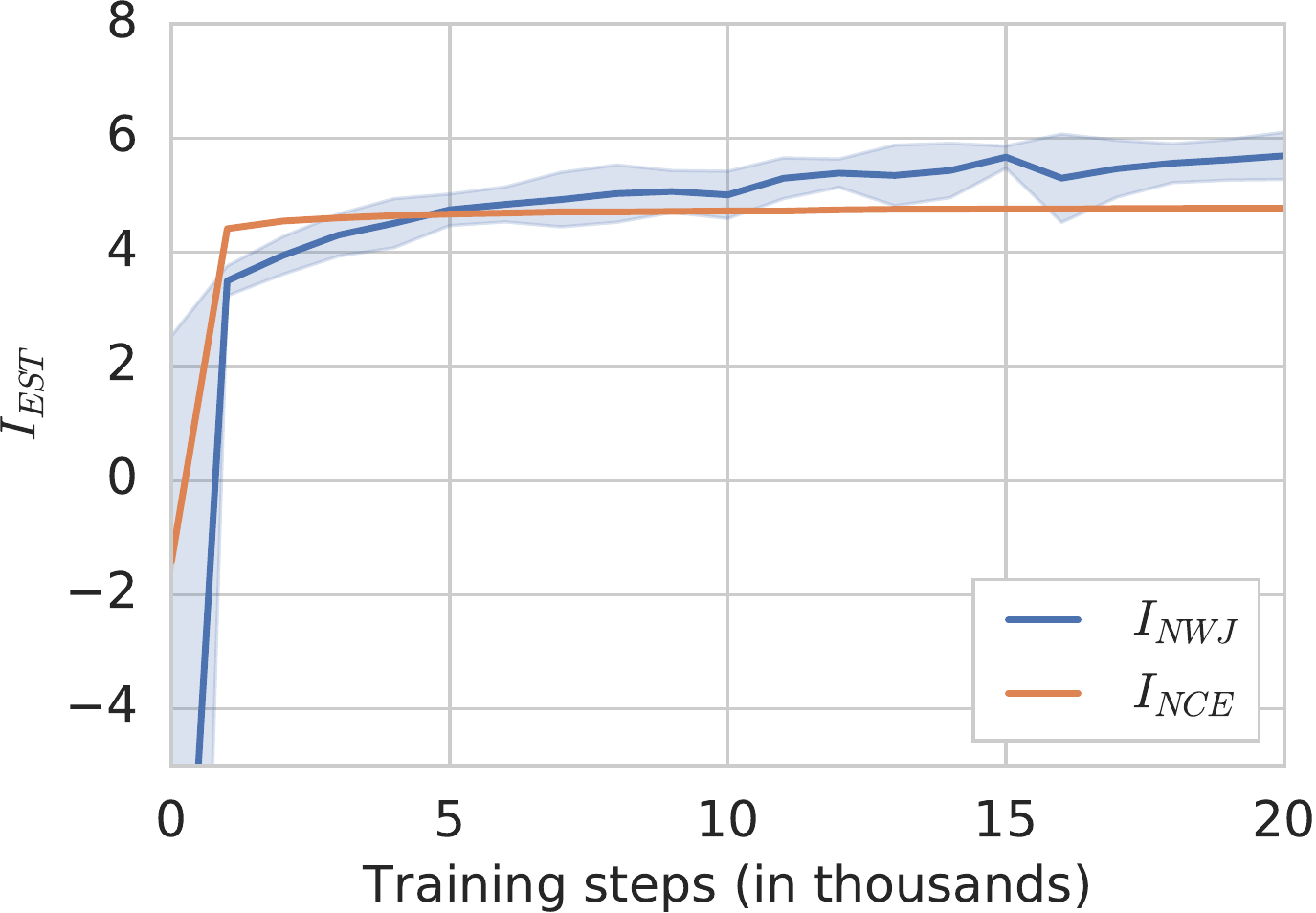}
    \caption{\:\label{subfig:realnvp_a}}
    \end{subfigure}
    \begin{subfigure}{0.32\textwidth}
    \includegraphics[width=\textwidth]{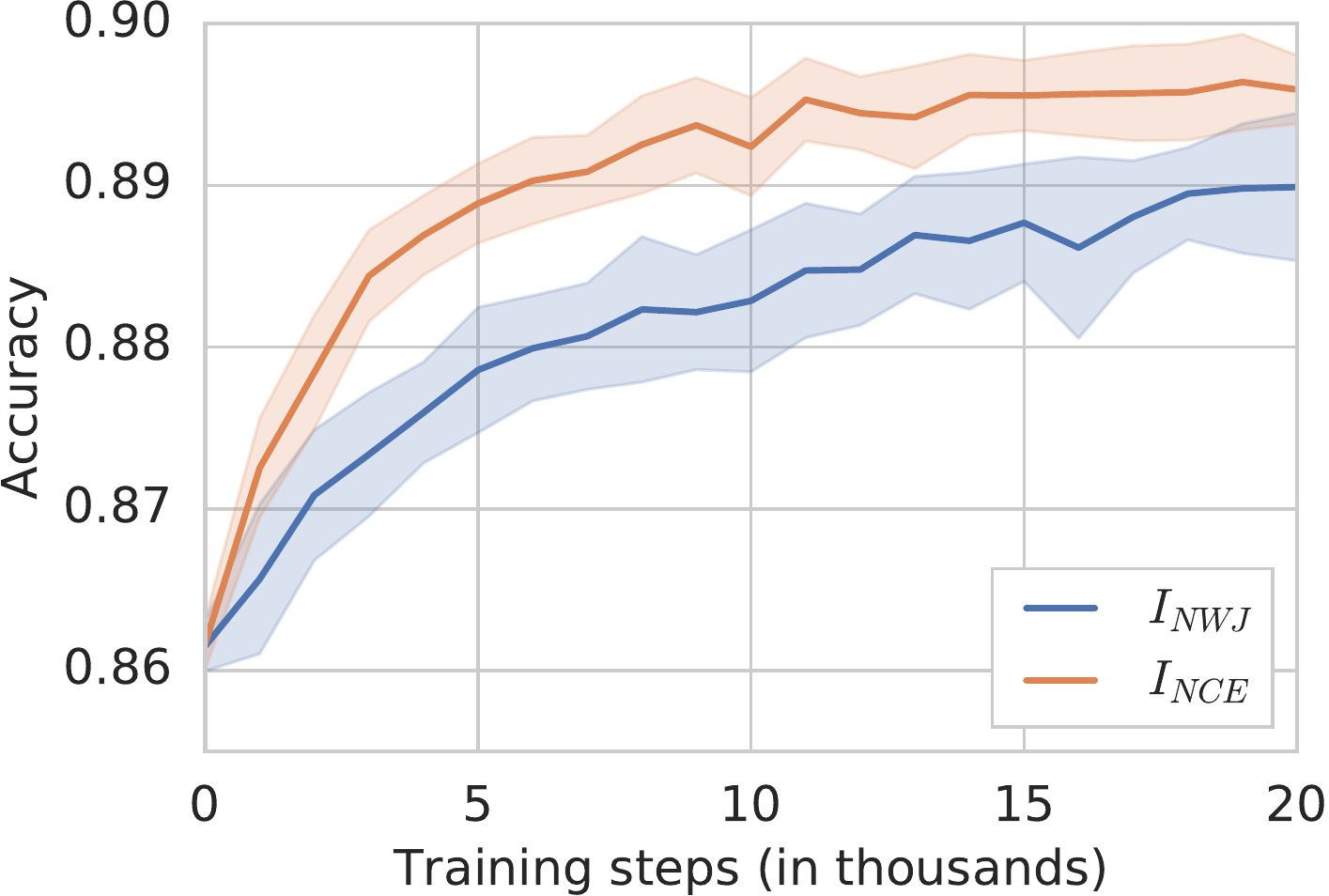}
    \caption{\:\label{subfig:realnvp_b}}
    \end{subfigure}
    \begin{subfigure}{0.32\textwidth}
    \includegraphics[width=\textwidth]{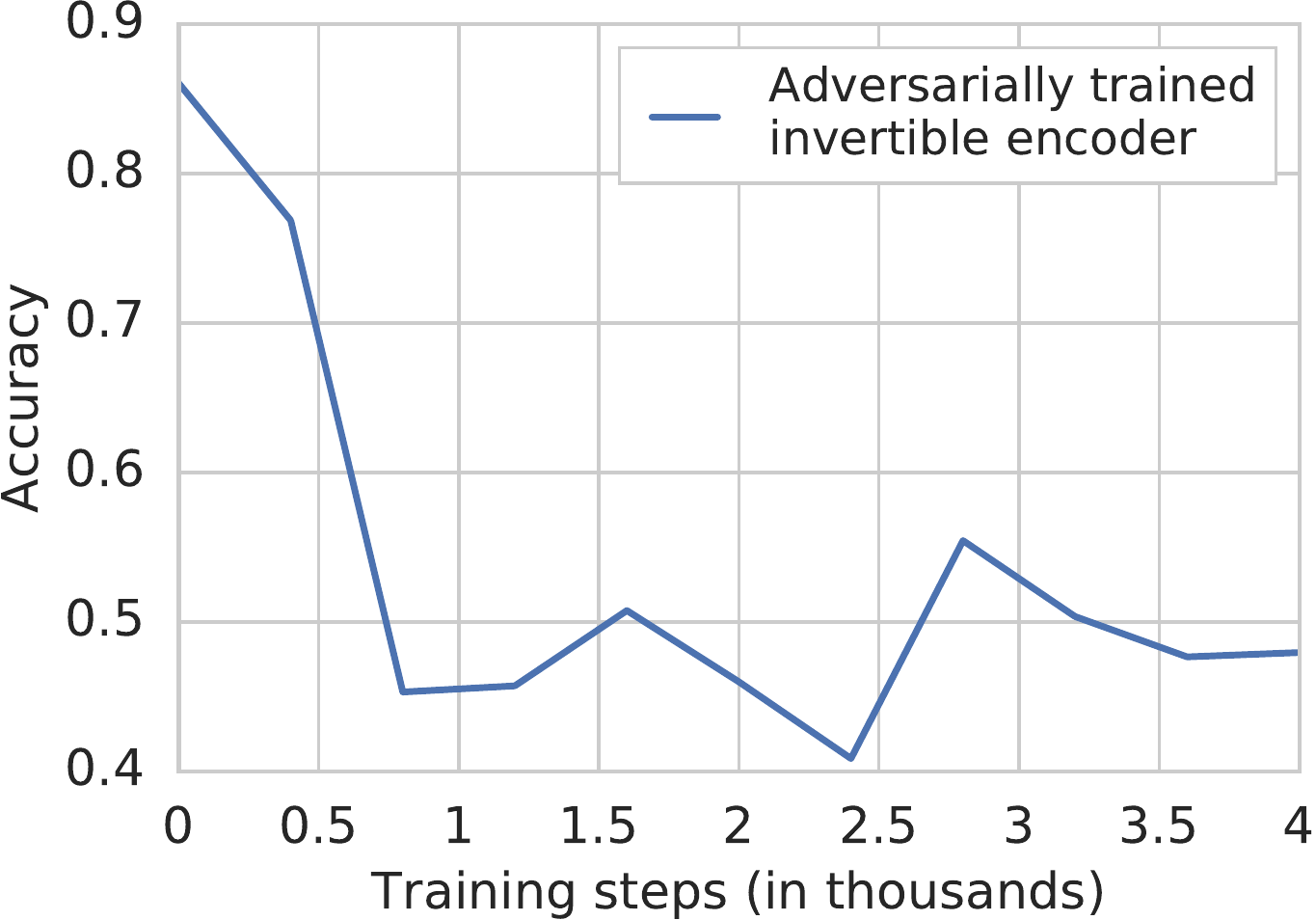}
    \caption{\:\label{subfig:realnvp_c}}
    \end{subfigure}
    \caption{(a, b) Maximizing $\iest$ over a family of invertible models. 
      We can see that during training the downstream classification performance improves (and the testing $\iest$ value increases), even though the true MI remains constant throughout. (c) Downstream classification accuracy of a different invertible encoder (with the same architecture) trained to have \emph{poor} performance. This demonstrates the existence of encoders that provably maximize MI yet have bad downstream performance.
    }
    \label{fig:realnvp}
    \vspace{-3mm}
\end{figure}

\subsection{Large MI is not predictive of downstream performance} \label{sec:lossexp}
We start by investigating the behavior of $\infonce$ and $\inwj$ when $g_1$ and $g_2$ are parameterized to be always invertible. Hence, for any choice of the encoder parameters, the MI is constant, i.e.\ $I(g_1(X^{(1)}); g_2(X^{(2)})) = I(X^{(1)};X^{(2)})$ for all $g_1, g_2$.
This means that if we could exactly compute the MI, any parameter choice would be a global maximizer and thus the gradients vanish everywhere.\footnote{In the context of continuous distributions and invertible representation functions $g$ the InfoMax objective might be infinite. \citet{bell1995information} suggest to instead maximize the entropy of the representation. In our case the MI between the two views is finite as the two halves are not deterministic functions of each another.} However, as we will empirically show, the estimators we consider are biased and prefer those settings which yield representations useful for the downstream classification task.

\textbf{Maximized MI and improved downstream performance}\quad We model $g_1$ and $g_2$ using the invertible RealNVP architecture~\citep{dinh2016density}. We use a total of 30 coupling layers, and each of them computes the shift using a separate MLP with two ReLU hidden layers, each with 512 units.

Figure~\ref{fig:realnvp} shows the testing value of $\iest$ and the testing accuracy on the classification task. Despite the fact that MI is maximized by any instantiation of $g_1$ and $g_2$, $\iest$ and downstream accuracy increase during training, implying that the estimators provide gradient feedback leading to a representation useful for linear classification. This confirms our hypothesis that the estimator biases the encoders towards solutions suitable to solve the downstream linear classification task. 

The previous experiment demonstrated that among many invertible encoders, all of which are globally optimal MI maximizers, some give rise to improved linear classification performance over raw pixels, and maximizing $\infonce$ and $\inwj$ yields such encoders. Next we demonstrate that for the same invertible encoder architecture there are model parameters for which linear classification performance is \emph{significantly worse} than using raw pixels, despite also being globally optimal MI maximizers.

\begin{figure}[t]
    \vspace{-2mm}
    \centering
    \begin{subfigure}{0.32\textwidth}
      \includegraphics[width=\textwidth]{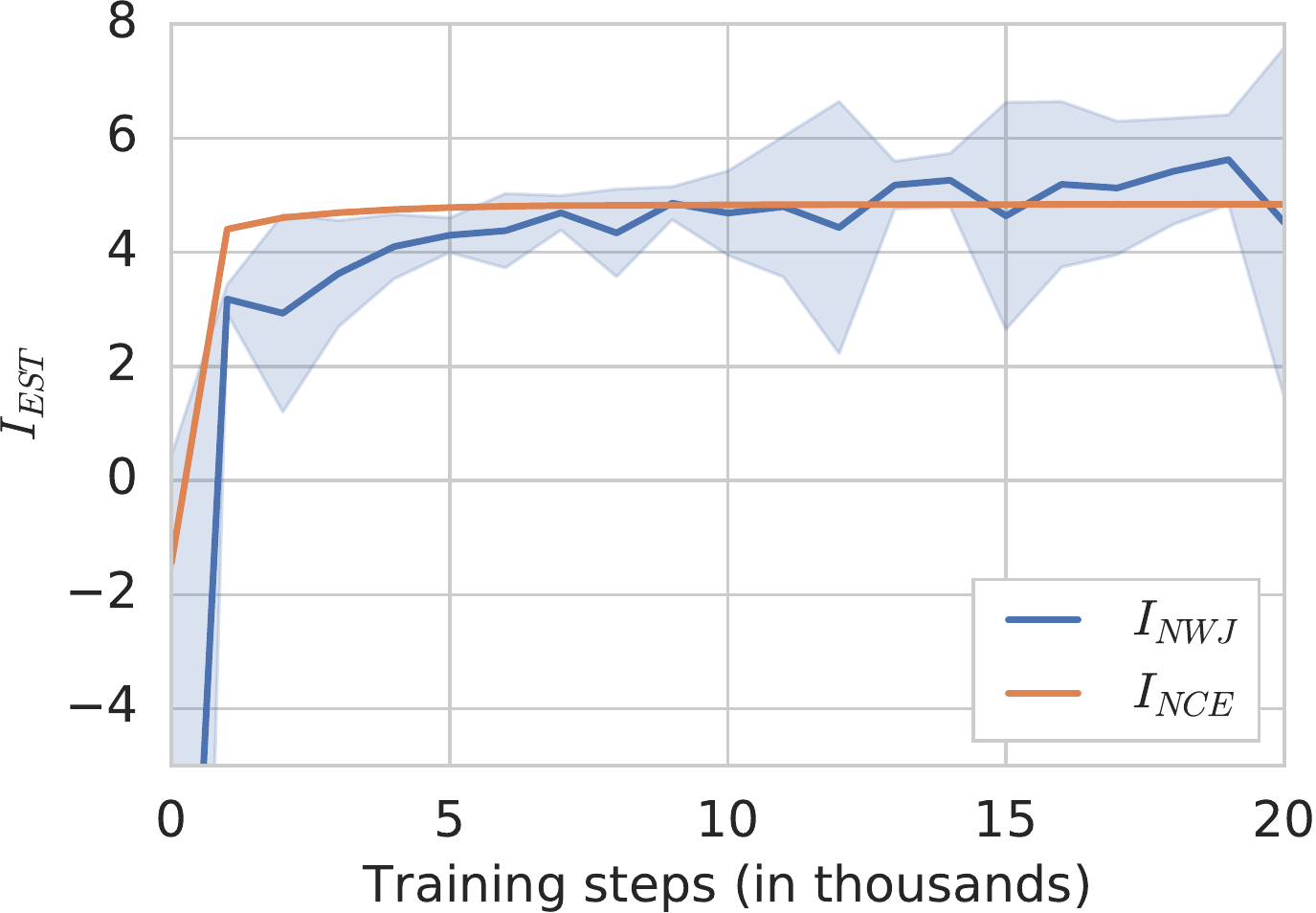}
      \caption{\:\label{fig:non_inv_sub1}}
    \end{subfigure}
    \begin{subfigure}{0.32\textwidth}
      \includegraphics[width=\textwidth]{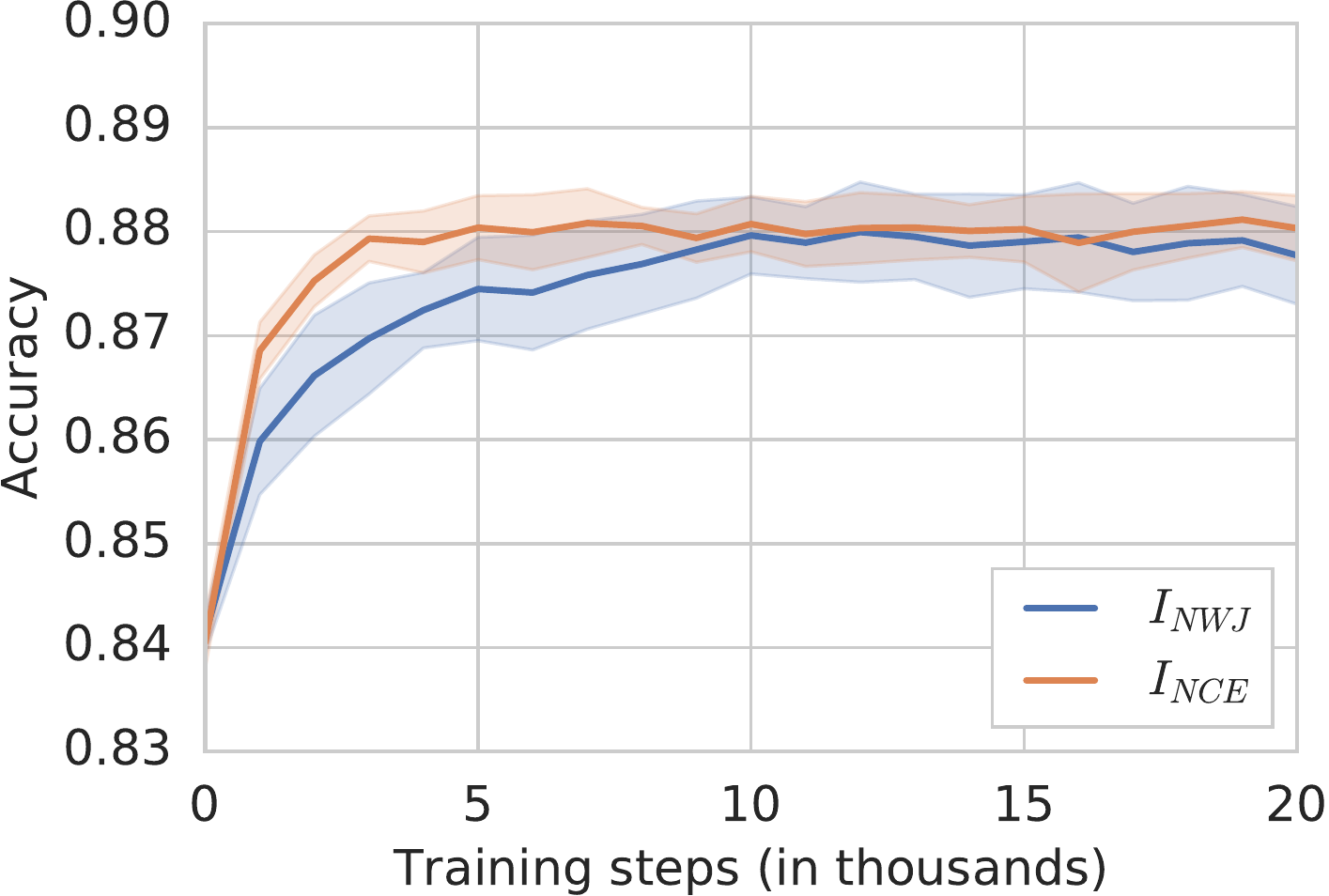}
      \caption{\:\label{fig:non_inv_sub2}}
    \end{subfigure}
    \begin{subfigure}{0.32\textwidth}
      \includegraphics[width=\textwidth]{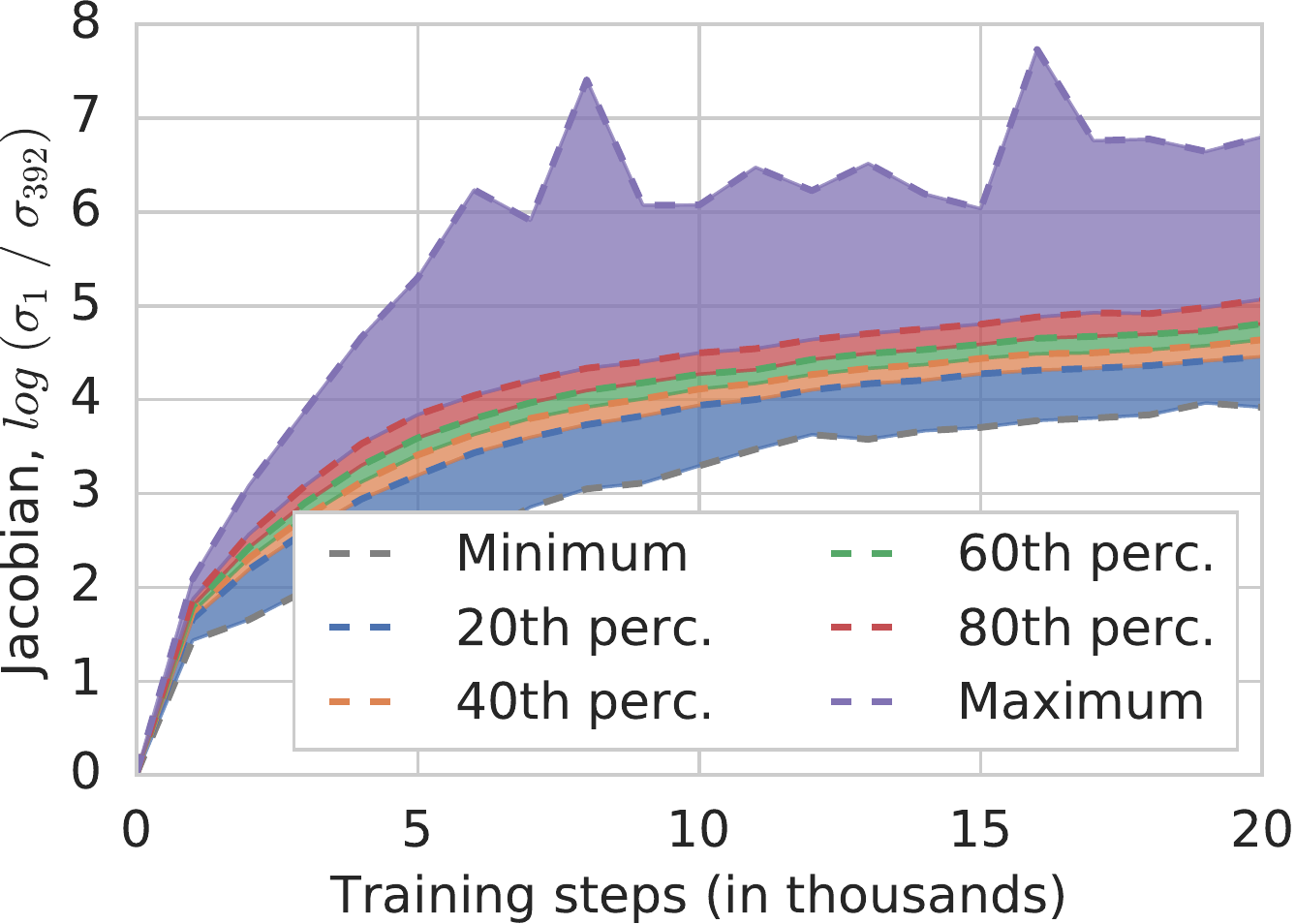}
      \caption{\:\label{fig:non_inv_sub3}}
    \end{subfigure}
    \vspace{-2mm}
    \caption{Maximizing $\iest$ using a network architecture that can realize both invertible and non-invertible functions. (a, b) As $\iest$ increases, the linear classification testing performance increases. (c) Meanwhile, the condition number of Jacobian evaluated at inputs randomly sampled from the data distribution deteriorates, i.e. $g_1$ becomes increasingly ill-conditioned (lines represent 0th, 20th, …, 100th percentiles for $\infonce$, the corresponding figure for $\inwj$ can be found in Appendix~\ref{app:figure}; the empirical distribution is obtained by randomly sampling $128$ inputs from the data distribution, computing the corresponding condition numbers, and aggregating them across runs).}
    \label{fig:mlp_shortcuts}
    \vspace{-3mm}
\end{figure}

\textbf{Maximized MI and worsened downstream performance}\quad
The goal is to learn a (bijective) representation maximizing MI such that the optimal linear classifier performs poorly; we achieve this by jointly training a representation and classifier in an adversarial fashion (a separate classifier is trained for the evaluation), without using a MI estimator. 
Intuitively, we will train the encoder to make the classification task for the linear layer as hard as possible. The experimental details are presented in Appendix~\ref{app:adversarial}. Figure~\ref{subfig:realnvp_c} shows the result of one such training run, displaying the loss of a separately trained classifier on top of the frozen representation.
At the beginning of training the network is initialized to be close to the identity mapping, and as such achieves the baseline classification accuracy corresponding to raw pixels.
All points beyond this correspond to invertible feature maps with worse classification performance, despite still achieving globally maximal MI.

Alternatively, the following thought experiment would yield the same conclusion: Using a lossless compression algorithm (e.g.\ PNG) for $g_1$ and $g_2$ also satisfies $I(g_1(X^{(1)}); g_2(X^{(2)})) = I(X^{(1)};X^{(2)})$. Yet, performing linear classification on the raw compressed bit stream $g_1(X^{(1)})$ will likely lead to worse performance than the baseline in pixel space. The information content alone is not sufficient to guarantee a useful \emph{geometry} in the representation space.

We next investigate the behavior of the model if we use a network architecture that can model both invertible and non-invertible functions. We would like to understand whether $\iest$ prefers the network to remain bijective, thus maximizing the true MI, or to ignore part of the input signal, which can be beneficial for representation learning.

\textbf{Bias towards hard-to-invert encoders}\quad We use an MLP architecture with $4$ hidden layers of the same dimension as the input, and with a skip connection added to each layer (hence by setting all weights to $0$ the network becomes the identity function).
As quantifying invertibility is hard, we analyze the condition number, i.e.\ the ratio between the largest and the smallest singular value, of the Jacobian of $g_1$: By the implicit function theorem, the function is invertible if the Jacobian is non-singular.\footnote{Formally, $g_1$ is invertible as long as the condition number of the Jacobian is finite. Numerically, inversion becomes harder as the condition number increases.}
However, the data itself might lie on a low-dimensional manifold, so that having a singular Jacobian is not necessarily indicative of losing invertibility on the support of the data distribution. 
To ensure the support of the data distribution covers the complete input space, we corrupt $X^{(1)}$ and $X^{(2)}$ in a coupled way by adding to each the \emph{same} 392-dimensional random vector, whose coordinates are sampled (independently of $X^{(1)}, X^{(2)}$) from a normal with standard deviation $0.05$ (the standard deviation of the pixels themselves is 0.3).
Hence, non-invertible encoders $g_1,g_2$ do not maximize $I(g_1(X^{(1)}); g_2(X^{(2)}))$.
\footnote{This would not necessarily be true if the noise were added in an uncoupled manner, e.g. by drawing it independently for $X^{(1)}$ and $X^{(2)}$, as the MI between the two noise vectors is $0$ in that case.}
As a reference point, the linear classification accuracy from pixels drops to about 84\% due to the added noise.

In Figure \ref{fig:mlp_shortcuts} we can see that the $\iest$ value and the downstream accuracy both increase during training, as before. Moreover, even though $g_1$ is initialized very close to the identity function (which maximizes the true MI), the condition number of its Jacobian evaluated at inputs randomly sampled from the data-distribution steadily deteriorates over time, suggesting that in practice (i.e.\ numerically) inverting the model becomes increasingly hard.
It therefore seems that the bounds we consider favor hard-to-invert encoders, which heavily attenuate part of the noise (as the support of the noise is the entire input space), over  well conditioned encoders (such as the identity function at initialization), which preserve the noise and hence the entropy of the data well.

\subsection{Higher capacity critics can lead to worse downstream performance}\label{sec:criticarch}

\begin{figure}[t!]
    \centering
    \includegraphics[width=0.32\textwidth]{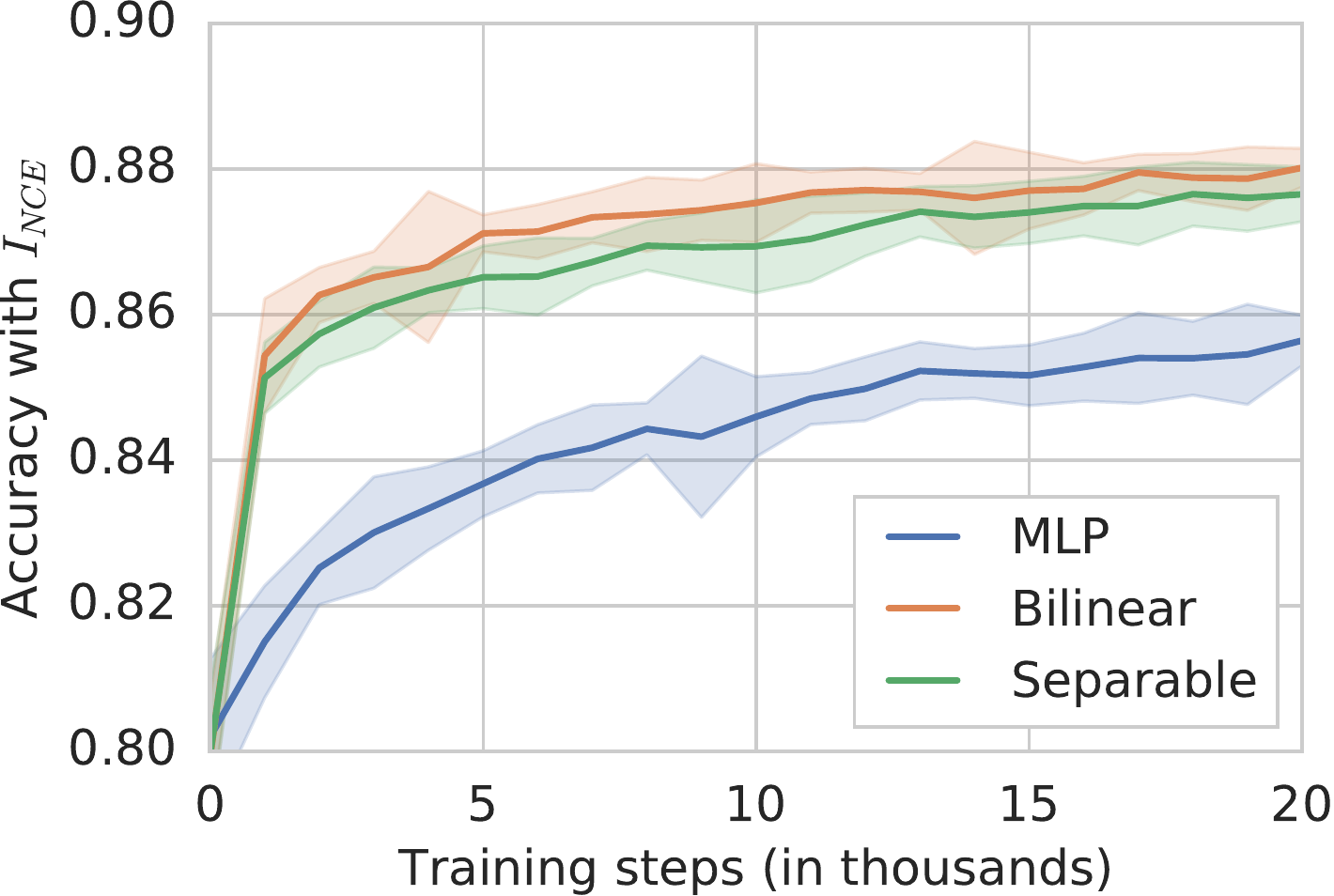}
    \includegraphics[width=0.32\textwidth]{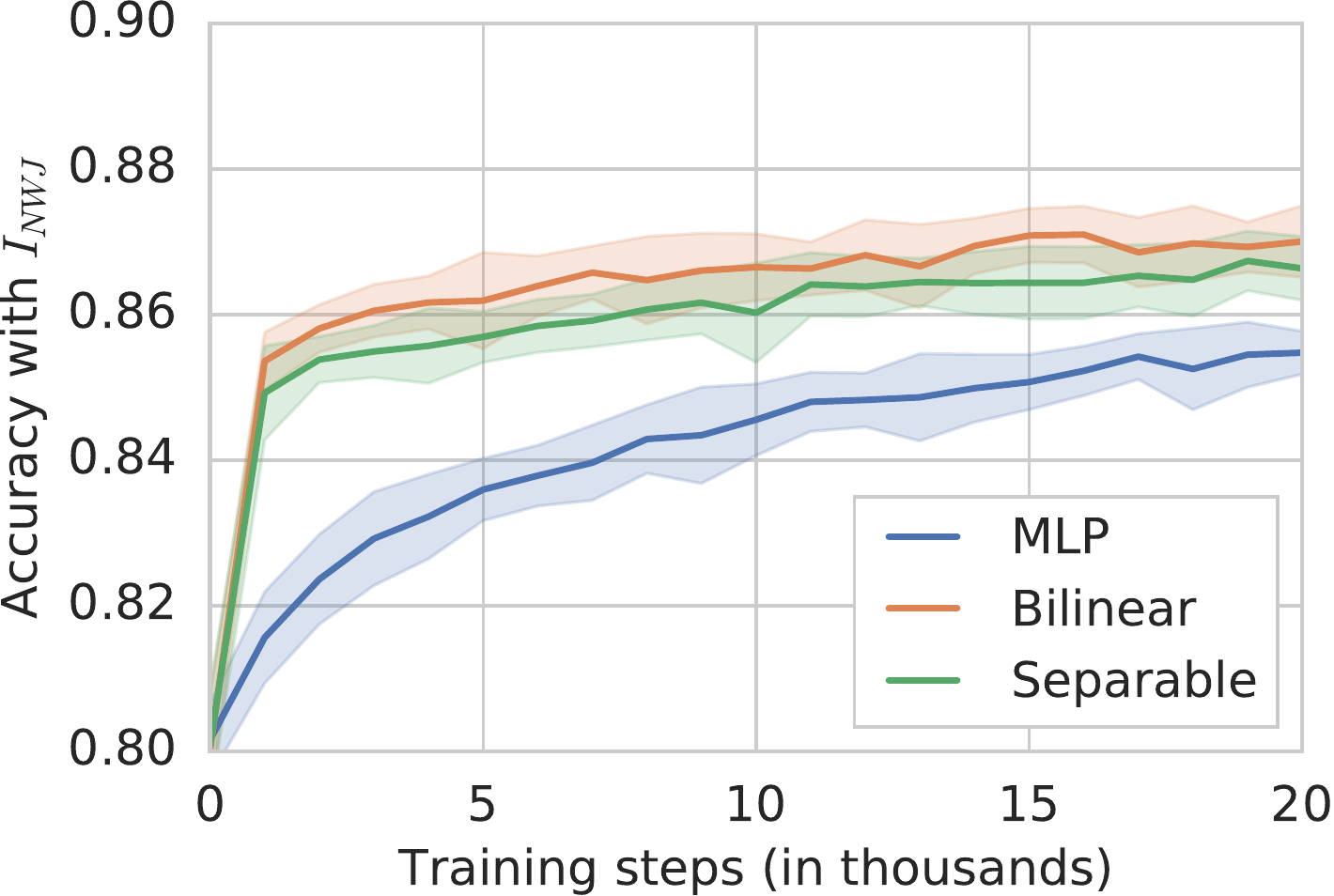}
    \includegraphics[width=0.3\textwidth]{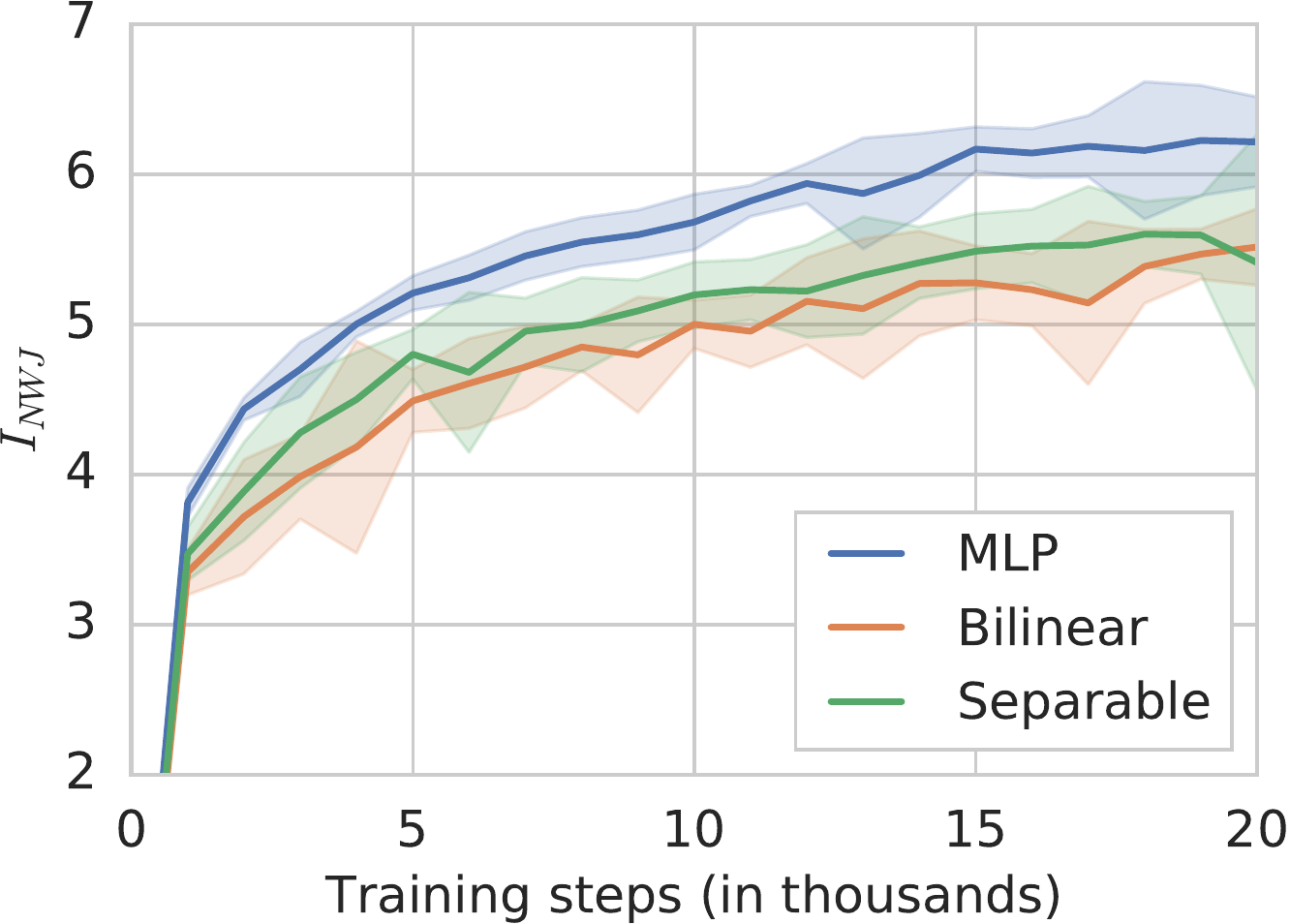}
    \caption{\looseness-1 Downstream testing accuracy for $\infonce$ and $\inwj$, and testing $\inwj$ value for MLP encoders $g_1, g_1$ and different critic architectures (the testing $\infonce$ curve can be found in Appendix~\ref{app:figure}). Bilinear and separable critics lead to higher downstream accuracy than MLP critics, while reaching lower $\inwj$.}\label{fig:critic_arch}
    \vspace{-5mm}
\end{figure}

In the previous section we have established that MI and downstream performance are only loosely connected. 
Clearly, maximizing MI is not sufficient to learn good representations and there is a non-trivial interplay between the architectures of the encoder, critic, and the underlying estimators.
In this section, we will focus on how one of these factors, namely the critic architecture, impacts the quality of the learned representation. Recall that it determines how the estimators such as $\infonce$ and $\mine$ distinguish between samples from the joint distribution $p(x,y)$ and the product of the marginals $p(x)p(y)$, and thereby determines the tightness on the lower bound.
A higher capacity critic should allow for a tighter lower-bound on MI~\citep{belghazi2018mine}. Furthermore, in the context of representation learning where $f$ is instantiated as a neural network, the critic provides gradient feedback to $g_1$ and $g_2$ and thereby shapes the learned representation.

\textbf{Looser bounds with simpler critics can lead to better representations}\quad We compare three critic architectures, a bilinear critic, a separable critic $f(x,y)=\phi_1(x)^\top \phi_2(y)$ ($\phi_1, \phi_2$ are MLPs with a single hidden layer with $100$ units and ReLU activations, followed by a linear layer with $100$ units; comprising $40$k parameters in total) and an MLP critic with a single hidden layer with $200$ units and ReLU activations, applied to the concatenated input $[x,y]$ ($40$k trainable parameters). Further, we use identical MLP architectures for $g_1$ and $g_2$ with two hidden layers comprising $300$ units each, and a third linear layer mapping to a $100$-dimensional feature space. 

Figure \ref{fig:critic_arch} shows the downstream testing accuracy and the testing $\iest$ value as a function of the iteration (see Appendix~\ref{app:cifar10results} for the corresponding results on CIFAR10). It can be seen that for both lower bounds, representations trained with the MLP critic barely outperform the baseline on pixel space, whereas the same lower bounds with bilinear and separable critics clearly lead to a higher accuracy than the baseline.
While the testing $\infonce$ value is close to the theoretically achievable maximum value for all critics, the testing $\mine$ value is higher for the MLP critic than for the separable and bilinear critics, resulting in a tighter bound on the MI. However, despite achieving the smallest $\mine$ testing value, the simple bilinear critic leads to a better downstream performance than the higher-capacity separable and MLP critics. 

A related phenomenon was observed in the context of variational autoencoders (VAEs) \citep{kingma2013auto}, where one maximizes a lower bound on the data likelihood: Looser bounds often yield better inference models, i.e.\ latent representations \citep{rainforth2018tighter}.

\subsection{Encoder architecture can be more important than the specific estimator} \label{sec:encoderarch}
We will now show that the encoder architecture is a critical design choice and we will investigate its effect on the learned representation. We consider the same MLP architecture ($238$k parameters) as in Section~\ref{sec:criticarch}, as well as a ConvNet architecture comprising two convolution layers (with a  $5\times5$ kernel, stride of $2$, ReLU activations, and $64$ and $128$ channels, respectively; $220$k parameters), followed by spatial average pooling and a fully connected layer. Before the average pooling operation we apply layer normalization~\citep{ba2016layer} which greatly reduces the variance of $\mine$.\footnote{LayerNorm avoids the possibility of information leakage within mini-batches that can be induced through batch normalization, potentially leading to poor performance~\citep{henaff2019data}.} To ensure that both network architectures achieve the same  lower bound $\iest$ on the MI, we minimize $L_t(g_1, g_2) = |\iest(g_1(X^{(1)}); g_1(X^{(2)})) - t|$ instead of solving \eqref{eq:newinfomax}, for two different values $t=2,4$.

Figure \ref{fig:encoder_arch} shows the downstream testing accuracy as a function of the training iteration (see Appendix~\ref{app:cifar10results} for the corresponding results on CIFAR10). It can be seen in the testing loss curves in Appendix~\ref{app:figure} that for both architectures and estimators the objective value after $7$k iterations matches the target $t$ (i.e., $L_t(g_1, g_2)\approx 0$) which implies that they achieve the same lower-bound on the MI. Despite matching lower bounds, ConvNet encoders lead to clearly superior classification accuracy, for both $\infonce$ and $\mine$. Note that, in contrast, the MLP and ConvNet architectures 
trained end-to-end in supervised fashion both achieve essentially the same testing accuracy of about $94\%$.

In the context of VAEs, \citet{alemi2018fixing} similarly observed that models achieving the same evidence lower bound value can lead to vastly different representations depending on the employed encoder architecture, and do not necessarily capture useful information about the data~\citep{tschannen2018recent,blau2019rethinking}.

\begin{figure}[t!]
  \centering
  \begin{subfigure}{0.32\textwidth}
    \includegraphics[width=\textwidth]{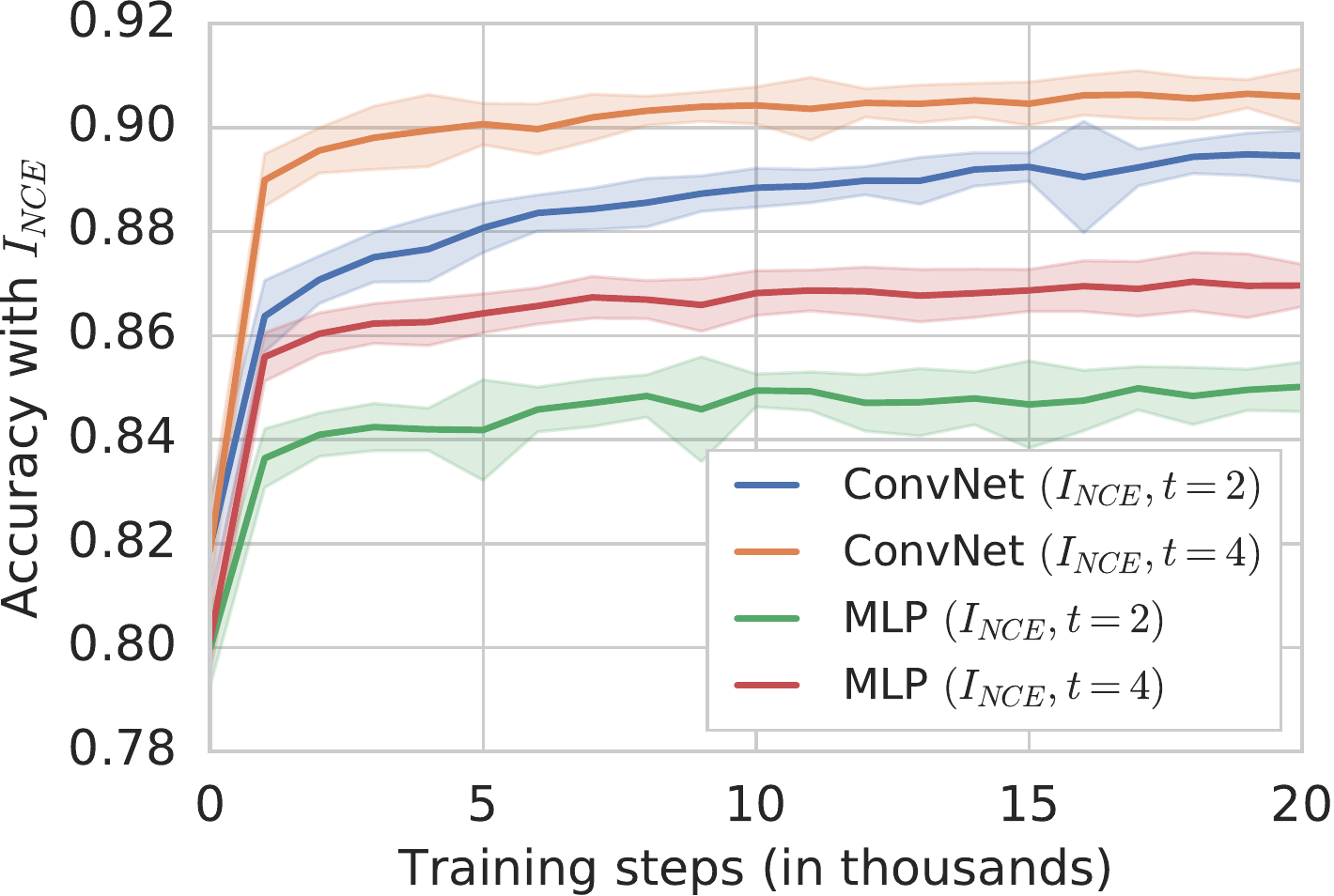}
    \caption{\:\label{sub1}}
  \end{subfigure}
  \begin{subfigure}{0.32\textwidth}
    \includegraphics[width=\textwidth]{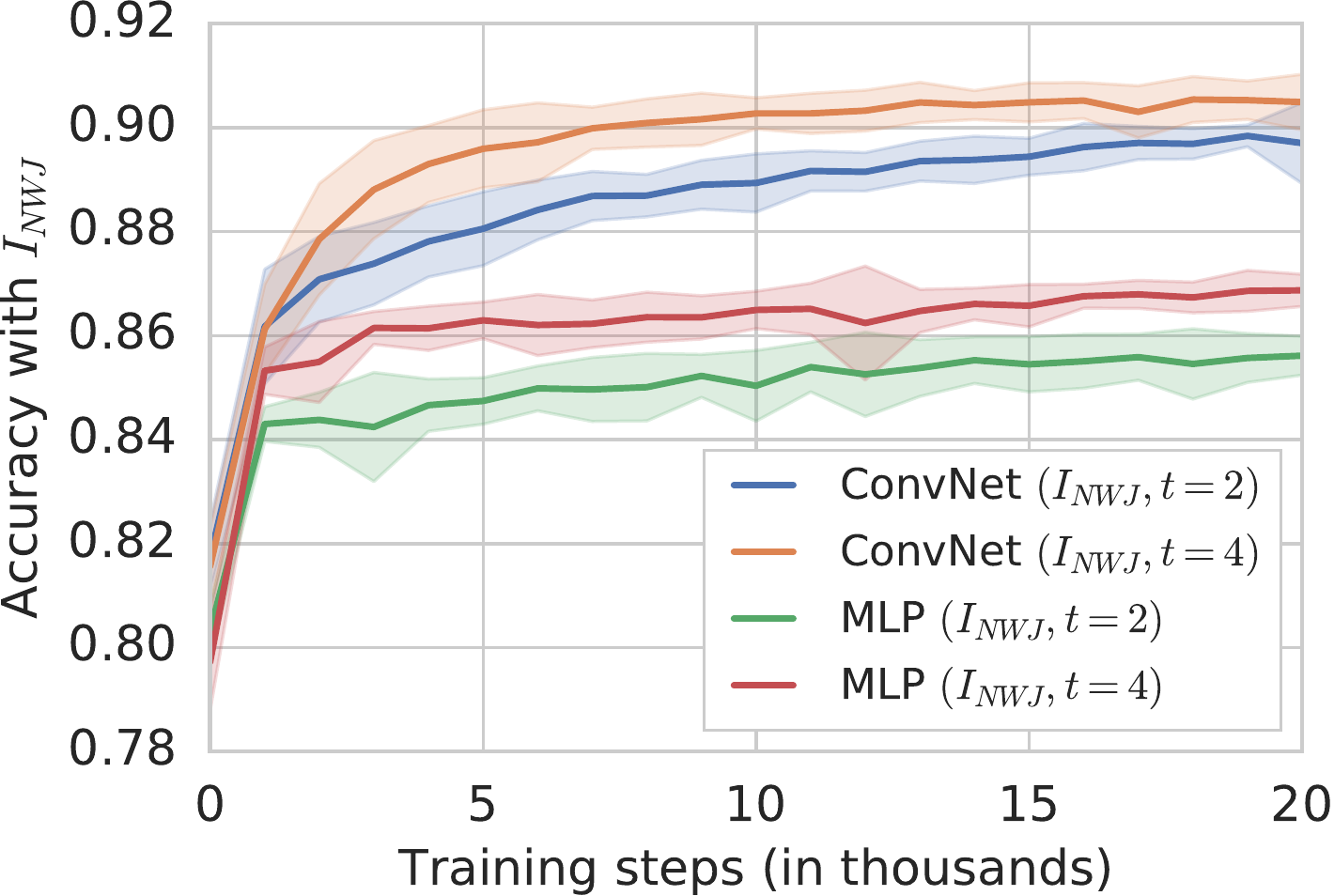}
    \caption{\:\label{sub2}}
  \end{subfigure}
  \begin{subfigure}{0.32\textwidth}
    \includegraphics[width=\textwidth]{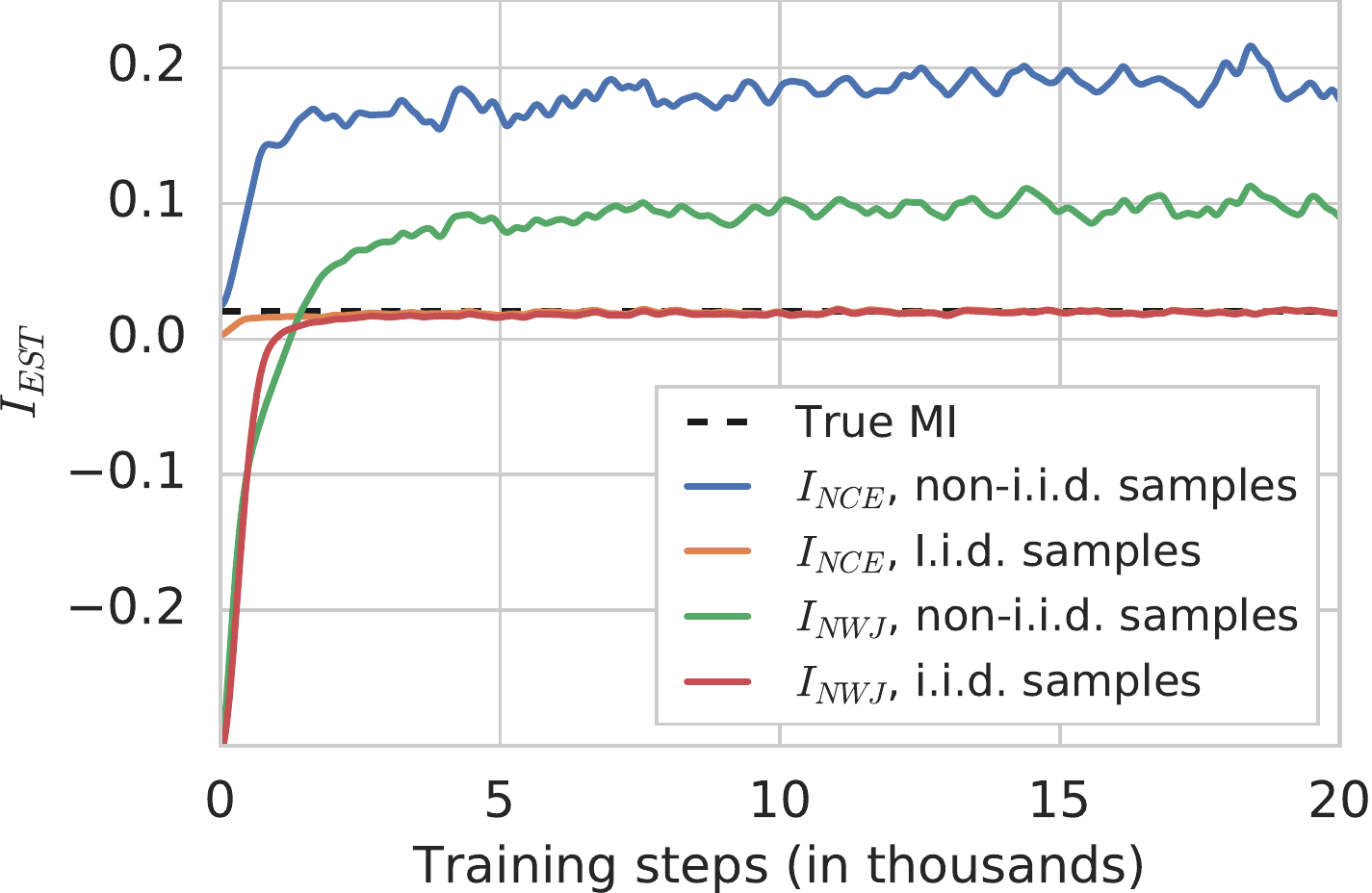}
    \caption{\:\label{fig:non_iid}}
  \end{subfigure}
  \caption{(a, b) Downstream testing accuracy for different encoder architectures and MI estimators, using a bilinear critic trained to match a given target $\iest$ of $t$ (we minimize $L_t(g_1, g_2) = |\iest(g_1(X^{(1)}); g_1(X^{(2)})) - t|$; loss curves can be found in Appendix~\ref{app:figure}). For a given estimator and $t$, ConvNet encoders clearly outperform MLP encoders in terms of downstream testing accuracy. (c) Estimating MI from \emph{i.i.d.}~and non-\emph{i.i.d.}~samples in a synthetic setting (Section~\ref{sec:triplet}). If negative samples are not drawn \emph{i.i.d.}, both $\infonce$ and $\inwj$ estimators can be \emph{greater} than the true MI. Despite being commonly justified as a lower bound on MI, $\infonce$ is often used in the non-\emph{i.i.d.}~setting in practice.}
  \label{fig:encoder_arch}
  \vspace{-3mm}
\end{figure}

\section{Connection to deep metric learning and triplet losses}\label{sec:triplet}

In the previous section we empirically demonstrated that there is a disconnect between approximate MI maximization and representation quality. However, many recent works have applied the $\infonce$ estimator to obtain state-of-the-art results in practice. We provide some insight on this conundrum by connecting $\infonce$ to a popular triplet ($k$-plet) loss known in the deep metric learning community.

\textbf{The metric learning view}\quad
Given sets of triplets, namely an \emph{anchor point} $x$, a positive instance $y$, and a negative instance $z$, the goal is to learn a representation $g(x)$ such that the distances (i.e., $\ell_2$) between $g(x)$ and $g(y)$ is smaller than the distance between $g(x)$ and $g(z)$, for each triplet. In the supervised setting, the positive instances are usually sampled from the same class, while the negative instances are sampled from any other class. A major focus in deep metric learning is how to perform \emph{(semi-)hard positive mining} --- we want to present non-trivial triplets to the learning algorithm which become more challenging as $g$ improves. Natural extensions to the unsupervised setting can be obtained by exploiting the structure present in the input data, namely spatial (e.g. patches from the same image should be closer than patches from different images) and temporal information (temporally close video frames should be encoded closer than the ones which are further away in time)~\citep{hoffer2015deep}.

\textbf{Connection to InfoNCE}\quad
The InfoNCE objective can be rewritten as follows:
\begin{align*}
    \infonce = \mathbb{E}\!\left[\frac{1}{K} \sum_{i=1}^{K} \log \frac{e^{f\left(x_{i}, y_{i}\right)}}{\frac{1}{K} \sum_{j=1}^{K} e^{f\left(x_{i}, y_{j}\right)}}\right]
   \! = \log K -\mathbb{E}\!\left[\frac{1}{K} \sum_{i=1}^{K} \log \left(1 +  \sum_{j\neq i} e^{f\left(x_{i}, y_{j}\right) - f\left(x_{i}, y_{i}\right)}\right)\!\right]\!.
\end{align*}
The derivation is presented in Appendix~\ref{eq:infonce_connection}. In the particular case that $x$ and $y$ take value in the same space and $f$ is constrained to be of the form $f(x,y) = \phi(x)^\top \phi(y)$, for some function $\phi$, this coincides (up to constants and change of sign) with the expectation of the \emph{multi-class K-pair} loss proposed in \citep[Eqn.\ (7)]{sohn2016improved}:
\begin{align}\vspace{-1mm}
    L_{\text{K-pair-mc}}\left( \{(x_i,y_i) \}_{i=1}^K, \phi \right) = \frac{1}{K}\sum_{i=1}^K \log\left( 1 + \sum_{j\not= i}e^{\phi(x_i)^\top \phi(y_j) - \phi(x_i)^\top\phi(y_i)} \right).\vspace{-1mm} \label{eq:npairloss}
\end{align}
Representation learning by maximizing $\infonce$ using a symmetric separable critic $f(x,y) = \phi(x)^\top \phi(y)$ and an encoder $g=g_1=g_2$ shared across views is thus equivalent to metric learning based on~\eqref{eq:npairloss}. When using different encoders for different views and asymmetric critics as employed by CPC, DeepInfoMax, and CMC one recovers asymmetric variants of \eqref{eq:npairloss}, see, e.g. \citep{yu2017cross, zhang2019large}. As a result, one can view~\eqref{eq:npairloss} as learning encoders with a parameter-less inner product critic, for which the MI lower-bound is very weak in general.

There are (at least) two immediate benefits of viewing recent representation learning methods based on MI estimators through the lens of metric learning. Firstly, in the MI view, using inner product or bilinear critic functions is sub-optimal since the critic should ideally be as flexible as possible in order to reduce the gap between the lower bound and the true MI. In the metric learning view, the inner product critic corresponds to a simple metric on the embedding space. The metric learning view seems hence in better accordance with the observations from Section~\ref{sec:criticarch} than the MI view. Secondly, it elucidates the importance of appropriately choosing the negative samples, which is indeed a critical component in deep metric learning based on triplet losses~\citep{norouzi2012hamming,schroff2015facenet}.

\textbf{InfoNCE and the importance of negative sampling}\quad
The negative sample mining issue also manifests itself in MI-based contrastive losses. In fact, while InfoNCE is a lower bound on MI if the negative samples are drawn from the true marginal distribution~\citep{pmlr-v97-poole19a}, i.e.
\begin{align*}
    I(X,Y) & \geq \mathbb{E}_{\prod_k p(x_k,y_k)} \frac{1}{K} \sum_{i=1}^K \left[\log\frac{e^{f(x_i,y_i)}}{\frac{1}{K} \sum_{j=1}^K e^{f(x_j,y_i)}}\right] \triangleq \infonce,
\end{align*}
we show that if the negative samples are drawn in a dependent fashion (corresponding to the $(x_i, y_i)$ being drawn identically but \emph{not independently}), the $\infonce$ estimator is in general neither a lower nor an upper bound on the true MI  $I(X,Y)$. We prove this in Appendix \ref{sec:infonce-inequality} and present empirical evidence here. Let $(X,Y) = Z + \epsilon$, where $Z \sim \mathcal{N}(0, \Sigma_Z)$ and $\epsilon \sim \mathcal{N}(0, \Sigma_\epsilon)$ are two-dimensional Gaussians.
We generate batches of data $(X_i, Y_i) = Z + \epsilon_i$ where each $\epsilon_i$ is sampled independently for each element of the batch, but $Z$ is sampled only once per batch. 
As such, $(X_i, Y_i)$ has the same marginal distribution for each $i$, but the elements of the batch \emph{are not independent}. Although we do not treat it theoretically, we also display results of the same experiment using the $\inwj$ estimator. The experimental details are presented in Appendix~\ref{app:noniid}. We observe in Figure~\ref{fig:encoder_arch}c that when using non-\emph{i.i.d.}~samples both the $\infonce$ and $\inwj$ values are larger than the true MI, and that when \emph{i.i.d.}~samples are used, both are lower bounds on the true MI. Hence, the connection to MI under improper negative sampling is no longer clear and might vanish completely.

Notwithstanding this fundamental problem, the negative sampling strategy is often treated as a design choice. In \citet{henaff2019data}, CPC is applied to images by partitioning the input image into patches. Then, MI (estimated by InfoNCE) between representations of patches and a \emph{context} summarizing several patches that are vertically above or below in the same image is minimized. Negative samples are obtained by patches from different images as well as patches from the \emph{same} image, violating the independence assumption. Similarly, \citet{oord2018representation} learn representations of speech using samples from a variety of speakers. It was found that using utterances from the same speaker as negative samples is more effective, whereas the ``proper'' negative samples should be drawn from an appropriate mixture of utterances from all speakers.

A common observation is that increasing the number of negative examples helps in practice ~\citep{hjelm2018learning,tian2019contrastive,bachman2019learning}. Indeed,~\citet{ma2018noise} show that $\infonce$ is consistent for any number of negative samples (under technical conditions), and~\citet{pmlr-v97-poole19a} show that the signal-to-noise ratio increases with the number of negative samples. On the other hand,~\citep{arora2019theoretical} have demonstrated, both theoretically and empirically, that increasing the number of negative samples does not necessarily help, and can even deteriorate the performance. The intricacies of negative sampling hence remain a key research challenge.
 
\section{Conclusion} \label{sec:conclusion}
Is MI maximization a good objective for learning good representations in an unsupervised fashion? Possibly, but it is clearly not sufficient. In this work we have demonstrated that, under the common linear evaluation protocol, maximizing lower bounds on MI as done in modern incarnations of the InfoMax principle can result in bad representations. We have revealed that the commonly used estimators have strong inductive biases and---perhaps surprisingly---looser bounds can lead to better representations. Furthermore, we have demonstrated that the connection of recent approaches to MI maximization might vanish if negative samples are not drawn independently (as done by some approaches in the literature). As a result, it is unclear whether the connection to MI is a sufficient (or necessary) component for designing powerful unsupervised representation learning algorithms. We propose that the success of these recent methods could be explained through the view of triplet-based metric learning and that leveraging advances in that domain might lead to further improvements. We have several suggestions for future work, which we summarize in the following.

\textbf{Alternative measures of information}\quad
We believe that the question of developing new notions of information suitable for representation learning should receive more attention. While MI has appealing theoretical properties,  it is clearly not sufficient for this task---it is hard to estimate, invariant to bijections and can result in suboptimal representations which do not correlate with downstream performance. Therefore, a new notion of information should account for both the amount of information stored in a representation and the geometry of the induced space necessary for good performance on downstream tasks. 
One possible avenue is to consider extensions to MI which explicitly account for the modeling power and computational constraints of the observer, such as the recently introduced $\mathcal{F}$-information~\citet{xu2020a}. Alternatively, one can investigate other statistical divergences to measure the discrepancy between $p(x, y)$ and $p(x)p(y)$. For example, using the Wasserstein distance leads to promising results in representation learning as it naturally enforces smoothness in the encoders~\citep{ozair2019wasserstein}.

\textbf{A holistic view}\quad 
We believe that any theory on measuring information for representation learning built on critics should explicitly take into account the function families one uses (e.g.\ that of the critic and estimator). Most importantly, we would expect some natural trade-offs between the amount of information that can be stored against how hard it is to extract it in the downstream tasks as a function of the architectural choices. While the distribution of downstream tasks is typically assumed unknown in representation learning, it might be possible to rely on weaker assumptions such as a family of invariances relevant for the downstream tasks.  Moreover, it seems that in the literature (i) the critics that are used to measure the information, (ii) the encoders, and (iii) the downstream models/evaluation protocol are all mostly chosen independently of each other. Our empirical results show that the downstream performance depends on the intricate balance between these choices and we believe that one should co-design them. This holistic view is currently under-explored and due to the lack of any theory or extensive studies to guide the practitioners.

\textbf{Going beyond the widely used linear evaluation protocol}\quad
While it was shown that learning good representations under the linear evaluation protocol can lead to reduced sample complexity for downstream tasks~\citep{arora2019theoretical}, some recent works~\citep{bachman2019learning,tian2019contrastive} report marginal improvements in terms of the downstream performance under a non-linear regime. Related to the previous point, it would hence be interesting to further explore the implications of the evaluation protocol, in particular its importance in the context of other design choices. We stress that a highly-nonlinear evaluation framework may result in better downstream performance, but it defeats the purpose of learning efficiently transferable data representations.

\textbf{Systematic investigations into design decisions that matter} \quad 
On the practical side, we believe that the link to metric learning could lead to new methods, that break away from the goal of estimating MI and place more weight on the aspects that have a stronger effect on the performance such as the negative sampling strategy. An example where the metric learning perspective led to similar methods as the MI view is presented by \citet{sermanet2018time}: They developed a multi-view representation learning approach for video data similar to CMC, but without drawing negative samples independently and seemingly without relying on the MI mental model to motivate their design choices. 

\section*{Acknowledgments}
We would like to thank Alex Alemi, Ben Poole, Olivier Bachem, and Alexey Dosovitskiy for inspiring discussions and comments on the manuscript. We are grateful for the general support and discussions from other members of Google Brain team in Zurich.

\bibliography{main}
\bibliographystyle{plainnat}

\clearpage

\appendix

\section*{Appendix}
\label{sec:appendix}
\section{Relation between \eqref{eq:newinfomax} and the InfoMax objective} \label{app:newinfomax}

\begin{proposition}\label{prop:newinfomax}
Let $X$ be a random variable and define $X_1=g_1(X)$ and $X_2=g_2(X)$ be arbitrary functions of $X$. Then
$I(X_1; X_2) \leq I\left(X; (X_1, X_2)\right)$.
\end{proposition}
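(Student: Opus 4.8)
The plan is to reduce the statement to two standard information-theoretic facts: the data processing inequality and the monotonicity of mutual information under the introduction of extra variables. The key structural observation is that both $X_1 = g_1(X)$ and $X_2 = g_2(X)$ are deterministic functions of the common source $X$. In particular, conditioned on $X$ the variables $X_1$ and $X_2$ are (trivially) independent, so $X_1 \to X \to X_2$ forms a Markov chain.

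First I would invoke the data processing inequality for this Markov chain to conclude $I(X_1; X_2) \leq I(X_1; X)$. Intuitively, $X_2$ is obtained from $X$ only through the map $X \mapsto X_2$, so it cannot share more information with $X_1$ than $X$ itself does. Next I would bound $I(X_1; X)$ from above by $I((X_1, X_2); X)$, which follows from the chain rule for mutual information:
\begin{align*}
I((X_1, X_2); X) = I(X_1; X) + I(X_2; X \mid X_1) \geq I(X_1; X),
\end{align*}
since conditional mutual information is non-negative. Chaining the two inequalities yields $I(X_1; X_2) \leq I(X_1; X) \leq I(X; (X_1, X_2))$, which is exactly the claim.

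The individual steps are routine; the only point requiring a little care is verifying that $X_1 \to X \to X_2$ is genuinely a Markov chain so that the data processing inequality applies. I would therefore spell out that, because $X_1$ and $X_2$ are deterministic given $X$, the conditional joint factorizes as $p(x_1, x_2 \mid x) = p(x_1 \mid x)\, p(x_2 \mid x)$, which is precisely the Markov condition. An entirely equivalent route avoids naming the Markov chain altogether: apply the chain rule to $I((X, X_1); X_2)$, use that $X_1$ is a function of $X$ to get $I((X, X_1); X_2) = I(X; X_2)$, and then read off $I(X_1; X_2) \leq I(X; X_2) \leq I(X; (X_1, X_2))$ directly. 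Either way, no delicate estimates are involved, so I do not anticipate a genuine obstacle beyond stating the Markov structure carefully.
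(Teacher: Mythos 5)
Your proof is correct and takes essentially the same route as the paper: both chain the inequalities $I(X_1;X_2) \le I(X_1;X) \le I\left(X;(X_1,X_2)\right)$, with the first inequality obtained from the data processing inequality applied to the Markov chain $X_1 \rightarrow X \rightarrow X_2$ (justified, as you do, by conditional independence of $X_1$ and $X_2$ given $X$). The only difference is cosmetic: the paper gets the second inequality from a second application of the data processing inequality (to the chain $X \rightarrow (X_1,X_2) \rightarrow X_1$), whereas you derive it from the chain rule $I((X_1,X_2);X) = I(X_1;X) + I(X_2;X \mid X_1)$ together with non-negativity of conditional mutual information --- which is just the standard proof of that instance of the DPI unpacked.
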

\begin{proof}
Follows by two applications of the \emph{data processing inequality}, which states that for random variables $X$, $Y$ and $Z$ satisfying the Markov relation $X \rightarrow Y \rightarrow Z$, the inequality $I(X; Z) \leq I(X; Y)$ holds.

The first step is to observe that $X$, $X_1$ and $X_2$ satisfy the relation $X_1 \leftarrow X \rightarrow X_2$, which is Markov equivalent to $X_1 \rightarrow X \rightarrow X_2$ (in particular, $X_1$ and $X_2$ are conditionally independent given $X$). It therefore follows that $I(X_1; X_2) \leq I(X; X_1)$.
The second step is to observe that $X \rightarrow (X_1, X_2) \rightarrow X_1$ and therefore $I(X; X_1) \leq I(X; (X_1, X_2))$.

Combining the two inequalities yields $I(X_1; X_2) \leq I(X; (X_1, X_2))$, as required.
\end{proof}

\section{Experiment details: Adversarially trained encoder (Section~\ref{sec:lossexp})}\label{app:adversarial}
In the following, we present the details for training the invertible model from Section~\ref{sec:lossexp} adversarially. We model $g_1$ with the same RealNVP architecture as in the first experiment, and do not model $g_2$. On top of $g_1(X^{(1)})$ we add a linear layer mapping to $10$ outputs (i.e. logits).
The parameters of the linear layer trained by minimizing the cross-entropy loss with respect to the true label of $X$ from which $X^{(1)}$ is derived. Conversely, the parameters of the encoder $g_1$ are trained to minimize the cross-entropy loss with respect to a uniform probability vector over all 10 classes. We use the Adam optimizer with a learning rate of $10^{-4}$ for the parameters of the classifier and $10^{-6}$ for the parameters of the encoder, and perform $10$ classifier optimization steps per encoder step. Furthermore, in a warm-up phase we train the classifier for $1$k iterations before alternating between classifier and encoder steps.

\section{Connection between metric learning and InfoNCE} \label{eq:infonce_connection}
$\infonce$ can be rewritten as follows:
\begin{align*}
    \infonce &= \mathbb{E}\left[\frac{1}{K} \sum_{i=1}^{K} \log \frac{e^{f\left(x_{i}, y_{i}\right)}}{\frac{1}{K} \sum_{j=1}^{K} e^{f\left(x_{i}, y_{j}\right)}}\right] \\
    &= \mathbb{E}\left[\frac{1}{K} \sum_{i=1}^{K} \log \frac{1}{\frac{1}{K} \sum_{j=1}^{K} e^{f\left(x_{i}, y_{j}\right) - f\left(x_{i}, y_{i}\right)}}\right] \\
    &= \mathbb{E}\left[-\frac{1}{K} \sum_{i=1}^{K} \log \frac{1}{K} \sum_{j=1}^{K} e^{f\left(x_{i}, y_{j}\right) - f\left(x_{i}, y_{i}\right)}\right] \\
    &= \log K -\mathbb{E}\left[\frac{1}{K} \sum_{i=1}^{K} \log \left(1 +  \sum_{j\neq i} e^{f\left(x_{i}, y_{j}\right) - f\left(x_{i}, y_{i}\right)}\right)\right].
\end{align*}

\section{InfoNCE under non-i.i.d. sampling}\label{sec:infonce-inequality}

The proof that InfoNCE is a lower bound on MI presented in \citep{pmlr-v97-poole19a} makes crucial use of the assumption that the negative samples are drawn from the true marginal distribution. We briefly review this proof to highlight the importance of the negative sampling distribution. Their proof starts from the NWJ lower bound of the KL divergence, namely that for any  
function $\tilde f$ the following lower bound holds \citep{nguyen2010estimating, nowozin2016f}:
\begin{align} \label{eq:nwjlbapp}
    I(X; Y) &= D_{KL}(p(x, y)\,||\,p(x)p(y))  \geq \mathbb{E}_{p(x,y)}[\tilde f(x,y)] - e^{-1}\mathbb{E}_{p(x)p(y)}[e^{\tilde f(x,y)}].
\end{align}

Suppose that $(X_i, Y_i)_{i=1}^K$ are \emph{i.i.d.} draws from $p(x,y)$ and write $X_{1:K} = (X_1, X_2,\ldots,X_K)$. Then, for any $i$ we have that $I(X_{1:K}; Y_i) = I(X_i; Y_i) = I(X; Y)$. We thus have

\begin{align*}
    I(X; Y) = I(X_{1:K}; Y_i) & \geq \mathbb{E}_{p(x_i,y_i)\prod_{k \neq i}p(x_k)}[\tilde f(x_{1:K},y_i)] - e^{-1}\mathbb{E}_{p(y_i)\prod_k p(x_k)}[e^{\tilde f(x_{1:K},y_i)}], 
\end{align*}
where the equality follows from the assumption that the $(X_i, Y_i)_{i=1}^K$ are i.i.d.\ and the inequality is \eqref{eq:nwjlbapp} applied to $I(X_{1:K}; Y_i)$. 
In particular, taking $\tilde f(x_{1:K}, y_i) = 1 + \log\frac{e^{f(x_i,y_i)}}{\frac{1}{K} \sum_{j=1}^K e^{f(x_j,y_i)}}$ yields

\begin{align}\label{eq:inceinterapp}
    I(X,Y) &\! \geq 1\! + \mathbb{E}_{p(x_i,y_i)\prod_{k \neq i}p(x_k)}\left[\log\frac{e^{f(x_i,y_i)}}{\frac{1}{K} \sum_{j=1}^K e^{f(x_j,y_i)}}\right]\! - \mathbb{E}_{p(y_i) \prod_k p(x_k)}\left[\frac{e^{f(x_i,y_i)}}{\frac{1}{K} \sum_{j=1}^K e^{f(x_j,y_i)}}\right]\!.
\end{align}

This is then averaged over the $K$ samples $Y_i$, in which case the third term above cancels with the constant $1$ (all occurences of $y_i$ in the last term of \eqref{eq:inceinterapp} can be replaced with $y_1$ thanks to $(X_i, Y_i)$ being identically distributed), yielding the familiar $\infonce$ lower bound:

\begin{align} \label{eq:incefinalapp}
    I(X,Y) & \geq \mathbb{E}_{\prod_k p(x_k, y_k)} \frac{1}{K} \sum_{i=1}^K \left[\log\frac{e^{f(x_i,y_i)}}{\frac{1}{K} \sum_{j=1}^K e^{f(x_j,y_i)}}\right] = \infonce.
\end{align}

The point in this proof that makes use of the \emph{i.i.d.}\ assumption of the negative samples is in the equality $I(X_i,Y_i) = I(X_{1:K}, Y_i)$, which allowed us to leverage multiple samples when estimating the MI between two variables. 
If instead the negative samples are drawn in a dependent fashion (corresponding to the $(X_i, Y_i)$ being drawn identically but \emph{not independently}), we have $I(X_i,Y_i) \leq I(X_{1:K}, Y_i)$, though the remainder of the proof still holds, resulting in

\begin{align*}
    I(X,Y)  \leq \frac{1}{K}\sum_{i=1}^K I(X_{1:K}; Y_i) & \geq \mathbb{E}_{p(x_{1:K}, y_{1:K})} \frac{1}{K} \sum_{i=1}^K \left[\log\frac{e^{f(x_i,y_i)}}{\frac{1}{K} \sum_{j=1}^K e^{f(x_j,y_i)}}\right].
\end{align*}

Therefore the resulting $\infonce$ estimator is neither a lower nor an upper bound on the true MI $I(X,Y)$.

\section{Experiment details: Non-i.i.d. sampling (Section~\ref{sec:triplet})}\label{app:noniid}
Recall that $(X,Y) = Z + \epsilon$. We use $Z \sim \mathcal{N}(0, \Sigma_Z)$ and $\epsilon \sim \mathcal{N}(0, \Sigma_\epsilon)$, where
\begin{equation*}
    \Sigma_Z=\begin{pmatrix}1 & -0.5 \\ -0.5 & 1 \end{pmatrix} \qquad \text{and} \qquad \Sigma_\epsilon = \begin{pmatrix}1 & 0.9 \\ 0.9 & 1 \end{pmatrix}.
\end{equation*}
Batches of data are obtained as $(X_i, Y_i) = Z + \epsilon_i$ where each $\epsilon_i$ is sampled independently for each element of the batch, but $Z$ is sampled only once per batch.
The true MI $I(X,Y)$ can be calculated analytically since $(X,Y)$ is jointly Gaussian with known covariance matrix $\Sigma_Z + \Sigma_\epsilon$: For two univariate random variables $(X,Y)$ that are jointly Gaussian with covariance $\Sigma$ the MI can be written as 
\begin{equation*}
    I(X,Y) = -\frac{1}{2}\log(1-\frac{\Sigma_{12}\Sigma_{21}}{\Sigma_{11}\Sigma_{22}}).
\end{equation*}
This can be derived using the decomposition $I(X,Y) = H(X) + H(Y) - H(X,Y)$ and the analytic expression for the entropy $H$ of a Gaussian. 

We compare the same setting trained using \emph{i.i.d.}~sampled pairs $(X_i, Y_i)$ as a baseline.
We parametrize the critic as a MLP with 5 hidden layers, each with 10 units and ReLU activations, followed by a linear layer and maximize $\infonce$ using these non-\emph{i.i.d.}~samples with batch size 128. Note that if a batch size of $K$ is used, the bound $\infonce \leq \log K$ always holds. We used $K$ sufficiently large so that $I(X,Y) \leq \log K$ to avoid $\infonce$ trivially lower bounding the true MI. 

\clearpage

\section{Additional Figures}\label{app:figure}
\FloatBarrier
\begin{figure}[h!]
    \centering
    \includegraphics[width=0.32\textwidth]{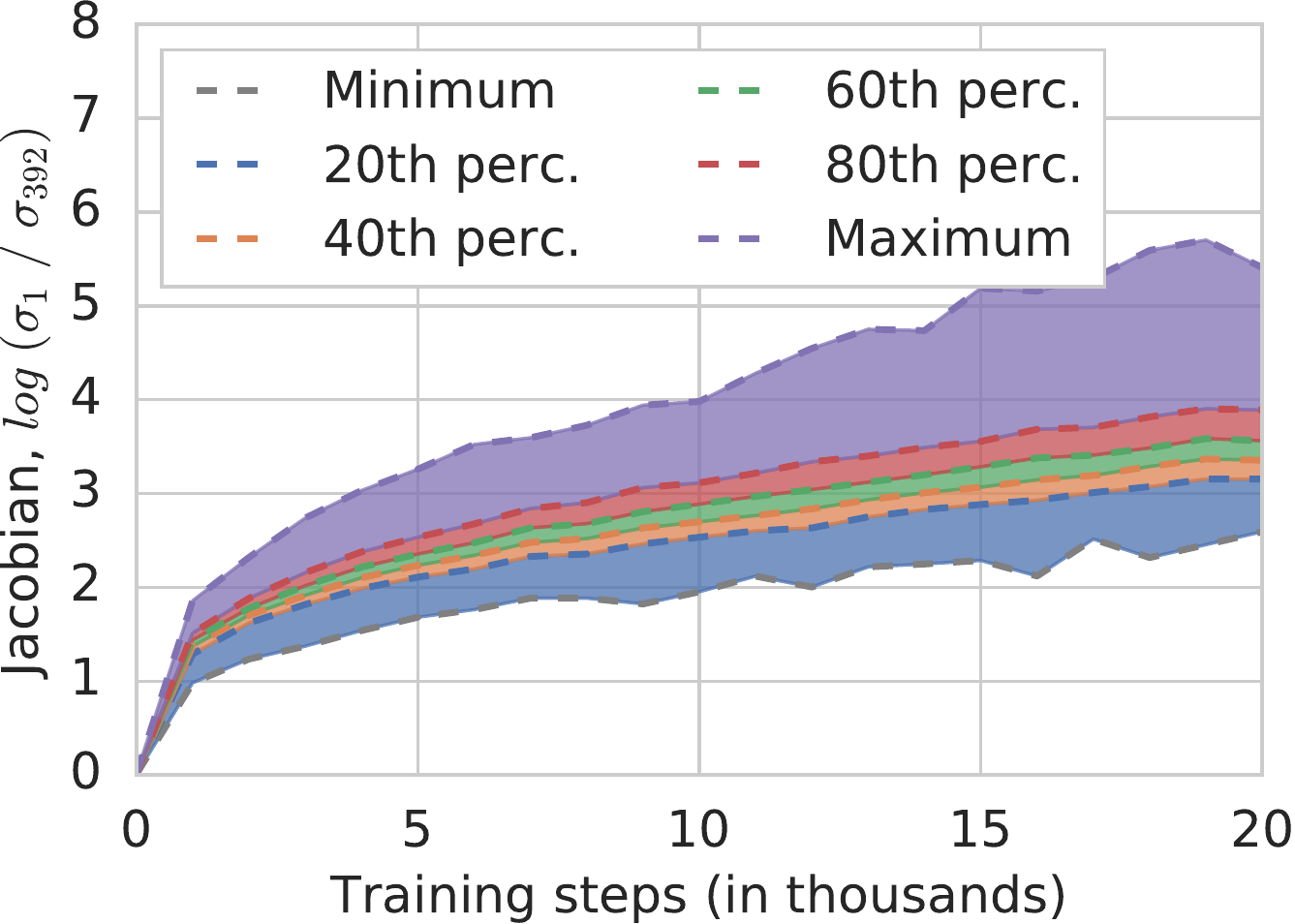}
    \caption{Additional plot Section~\ref{sec:lossexp}: The condition number of the Jacobian evaluated at inputs randomly sampled from the data distribution deteriorates, i.e. $g_1$ becomes increasingly ill-conditioned (lines represent 0th, 20th, …, 100th percentiles for $\inwj$; the empirical distribution is obtained by randomly sampling $128$ inputs from the data distribution, computing the corresponding condition numbers, and aggregating them across runs).}
\end{figure}

\begin{figure}[h!]
    \centering
    \includegraphics[width=0.35\textwidth]{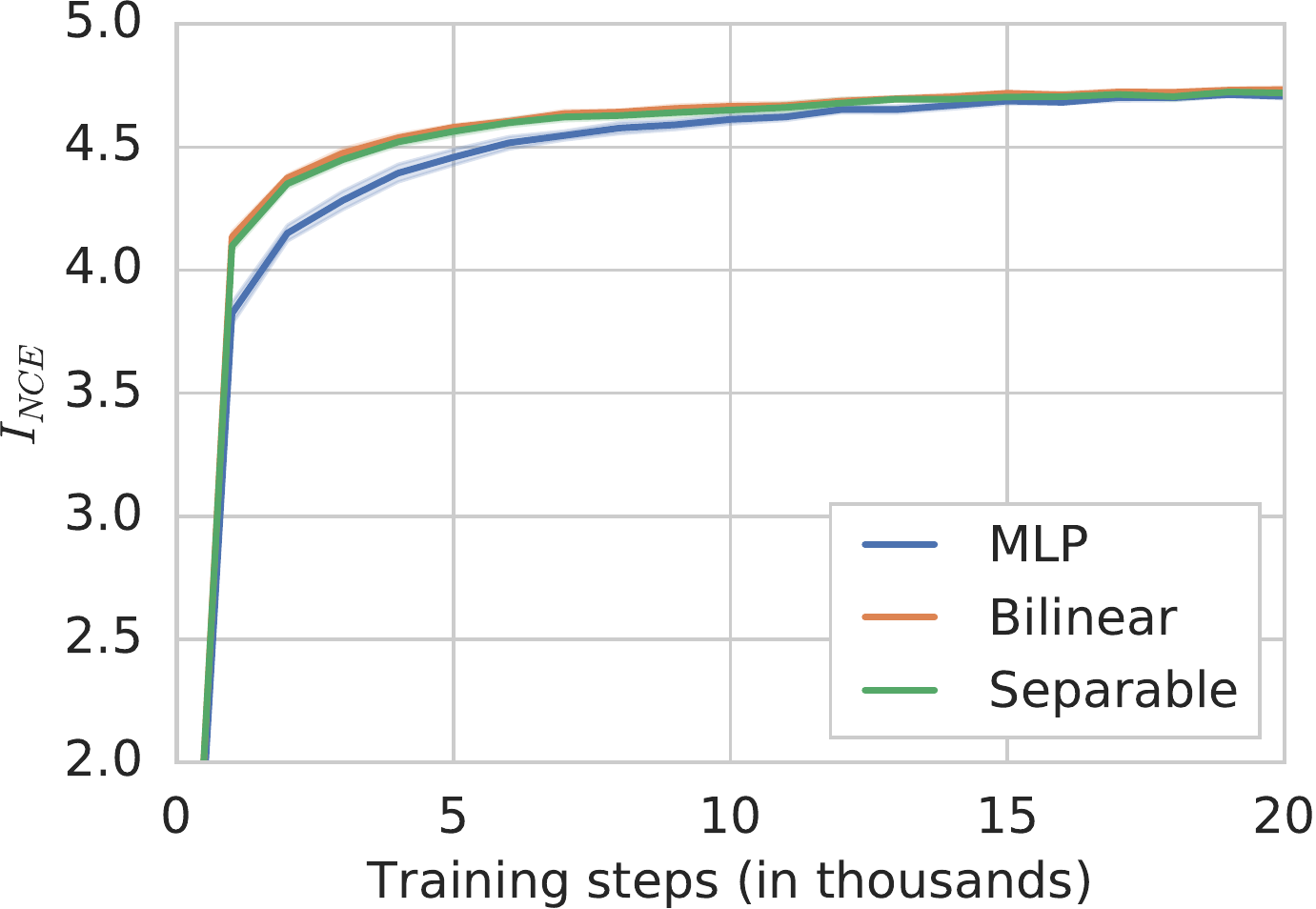}
    \caption{Additional plot Section~\ref{sec:criticarch}: Testing $\infonce$ value for MLP encoders $g_1, g_1$ and different critic architectures.}
\end{figure}

\begin{figure}[h!]
    \centering
    \includegraphics[width=0.35\textwidth]{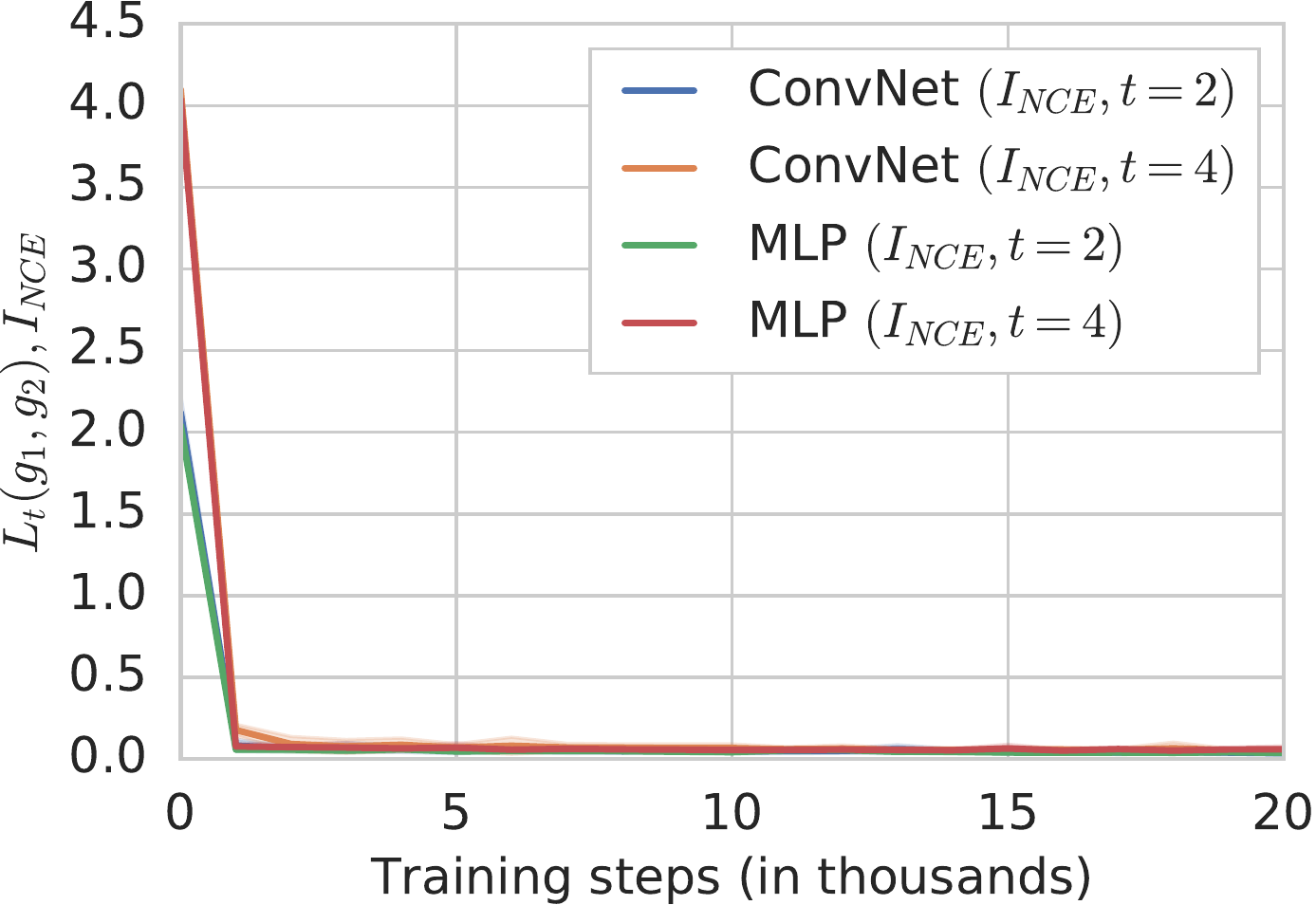}\qquad
    \includegraphics[width=0.35\textwidth]{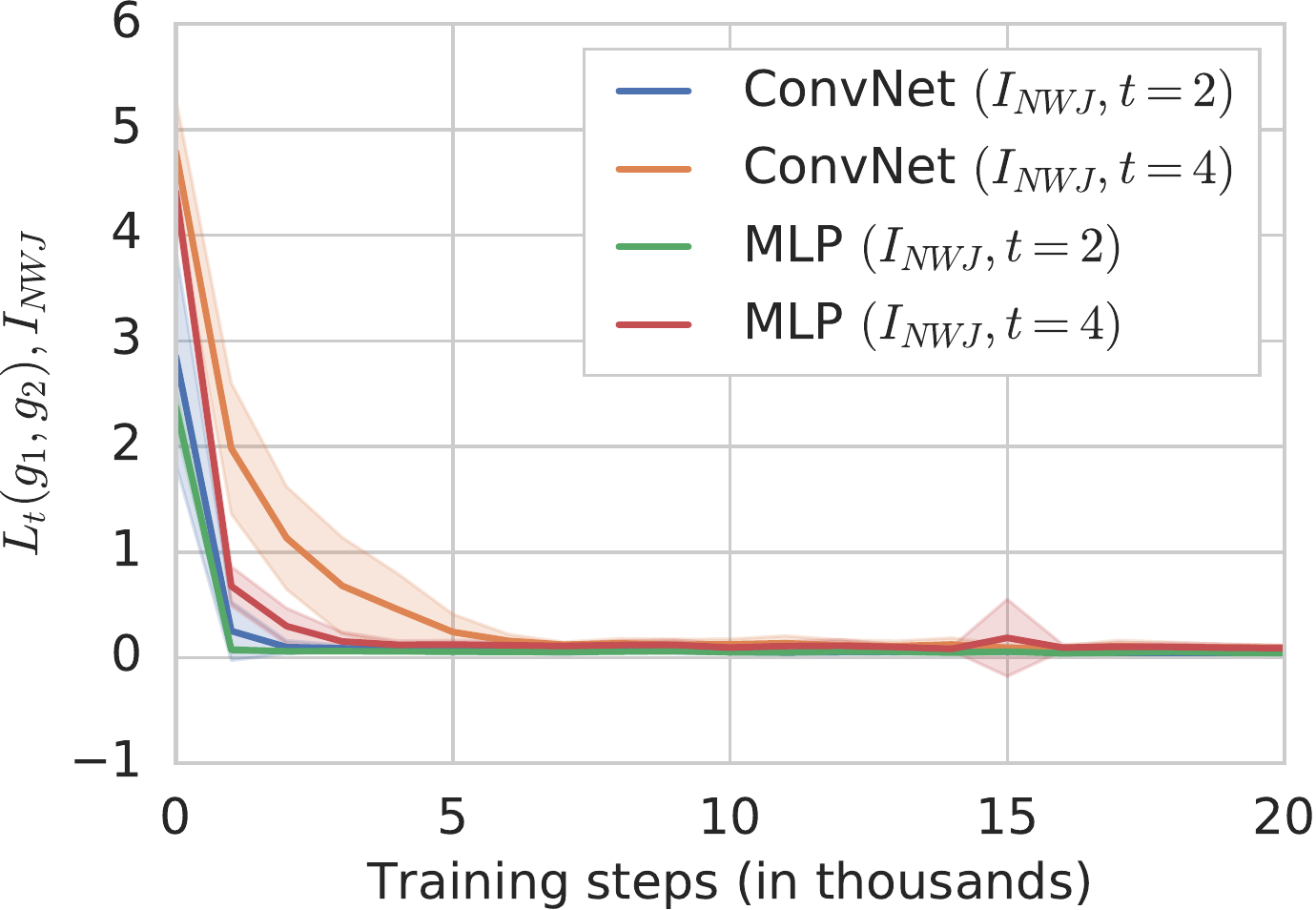}
    \caption{Additional plots Section~\ref{sec:encoderarch}: Testing loss for different encoder architectures and MI estimators, using a bilinear critic trained to match a given target $\iest$ of $t$ (we minimize $L_t(g_1, g_2) = |\iest(g_1(X^{(1)}); g_1(X^{(2)})) - t|$).}
  \end{figure}

\section{Results for the experiments from Sec.~\ref{sec:criticarch} and~\ref{sec:encoderarch} on CIFAR10}\label{app:cifar10results}

We run the experiments form Sections~\ref{sec:criticarch} and~\ref{sec:encoderarch} on CIFAR10 with minimal changes. Specifically, we use the same encoder and critic architectures with the only difference that the input layers of the encoders are adapted to process the (flattened) $32 \times 14 \times 3$ pixel image halves. Furthermore, we reduce the learning rate from $10^{-4}$ to $10^{-5}$ and triple the number of training iterations. Linear classification in pixel space from the upper image halves achieves a testing accuracy of about $24\%$. 

The CIFAR10 results for the experiment investigating the critic architecture (Section~\ref{sec:criticarch}) can be found in Figure~\ref{fig:critic_arch_cifar} and the results for the experiments investigating the encoder architecture (Section~\ref{sec:encoderarch}) in Figure~\ref{fig:encoder_arch_cifar}. The qualitative behavior of the different encoder and critic architectures in terms of downstream testing accuracy and testing $\iest$ is very similar to the one observed for MNIST. The conclusions made for MNIST hence carry over to CIFAR10.

\begin{figure}[t!]
    \centering
    \includegraphics[width=0.4\textwidth]{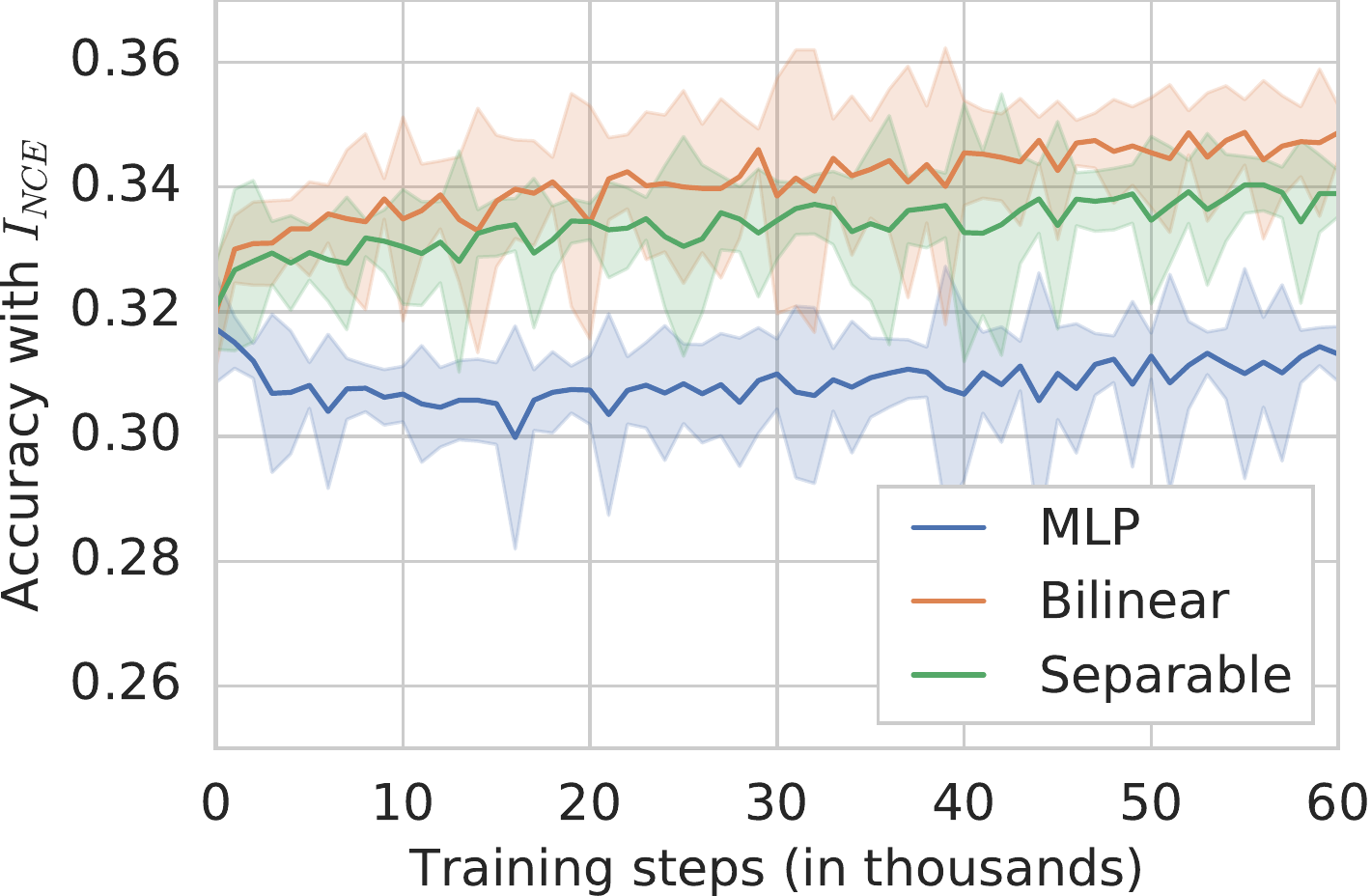} \qquad
    \includegraphics[width=0.4\textwidth]{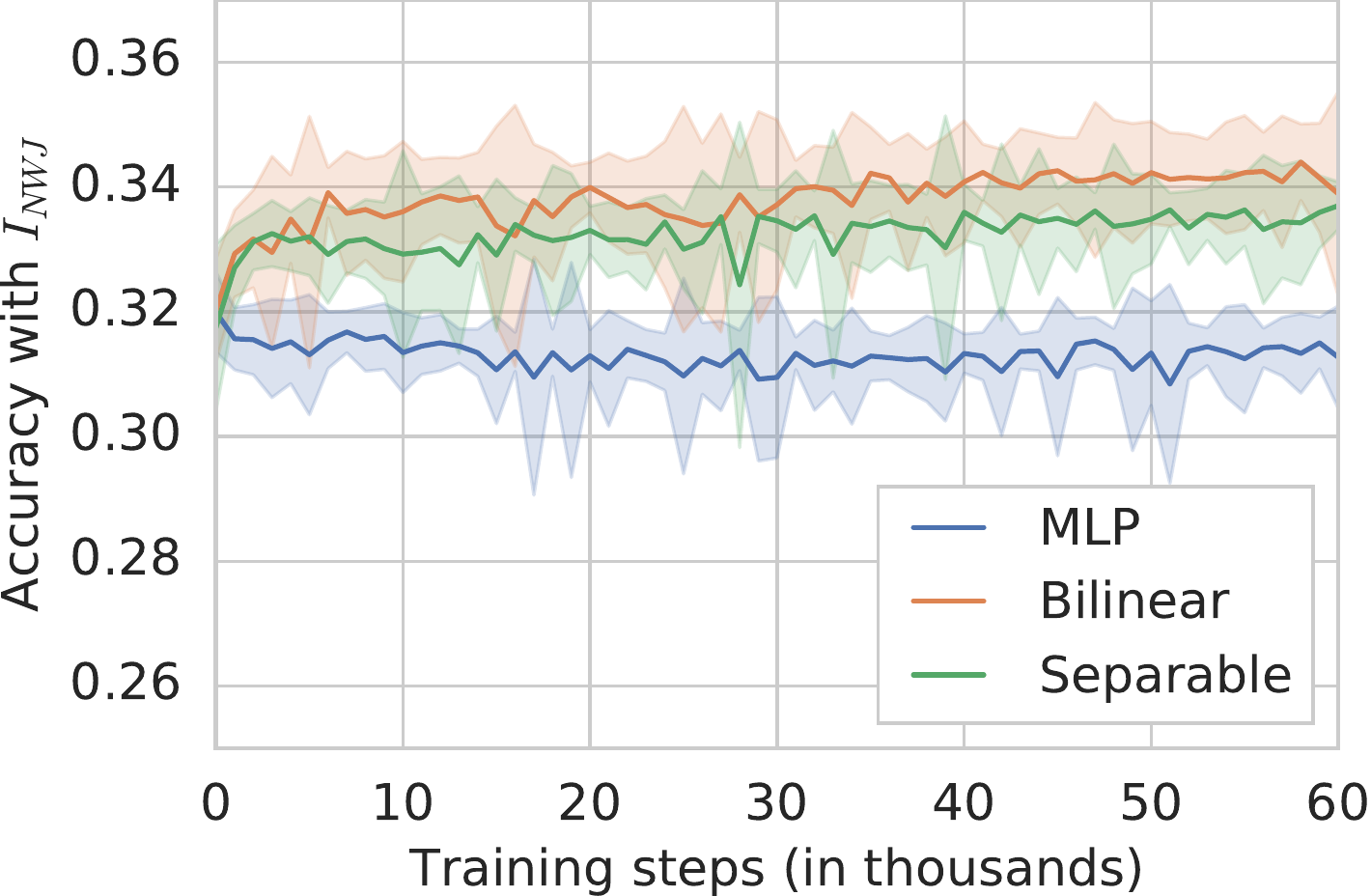} \\[5mm]
    \quad
    \includegraphics[width=0.385\textwidth]{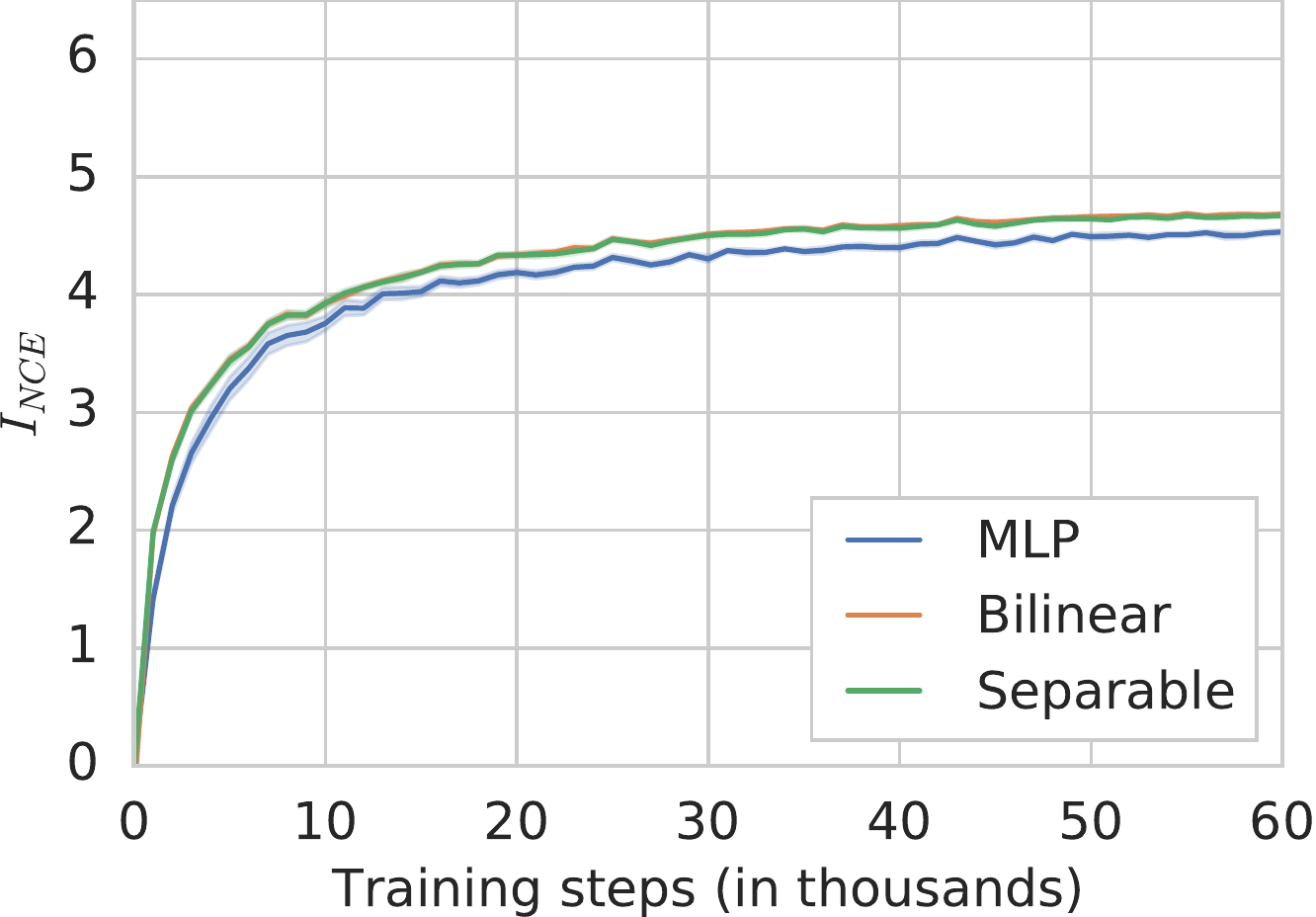} \qquad \quad
    \includegraphics[width=0.38\textwidth]{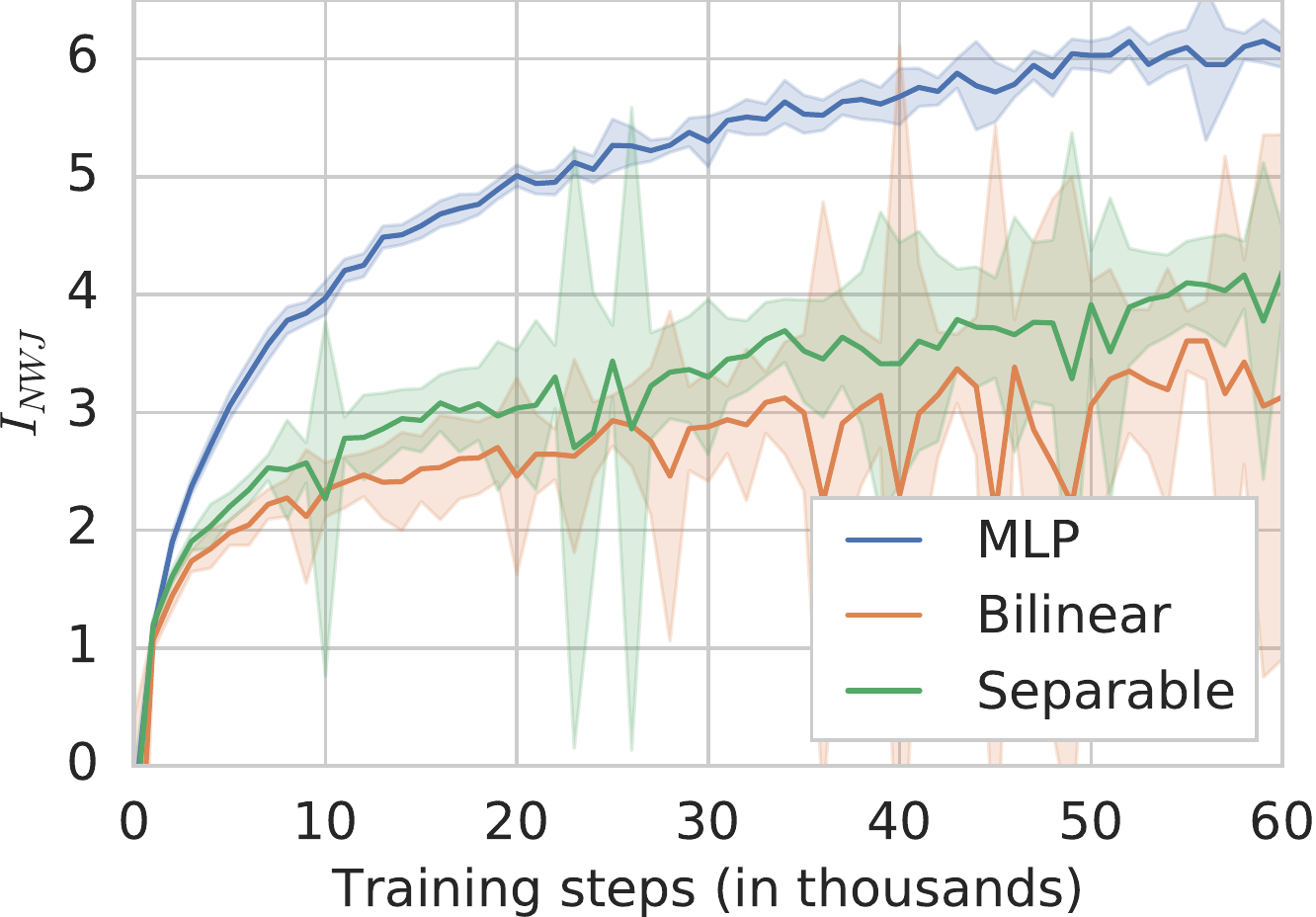}
    \caption{\looseness-1 Downstream testing accuracy for $\infonce$ and $\inwj$ (top row), and corresponding testing $\iest$ value (bottom row) for MLP encoders $g_1, g_1$ and different critic architectures. Bilinear and separable critics lead to higher downstream accuracy than MLP critics, while reaching lower $\inwj$. Note that $\inwj$ exhibits high variance (which is a known property of $\inwj$ \citep{pmlr-v97-poole19a}).}\label{fig:critic_arch_cifar}
\end{figure}

\begin{figure}[t!]
  \centering
  \begin{subfigure}{0.4\textwidth}
    \includegraphics[width=\textwidth]{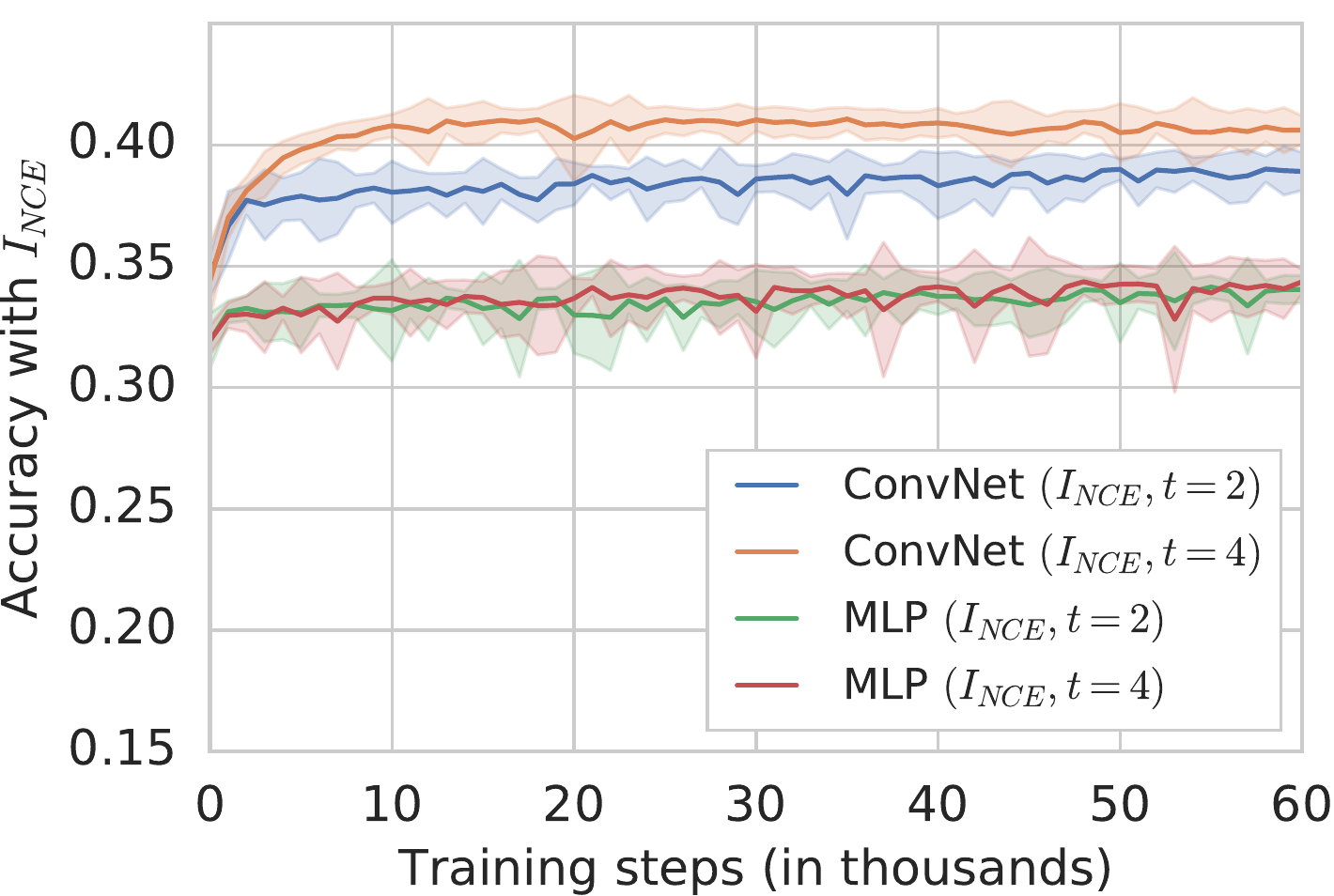}
  \end{subfigure} \qquad
  \begin{subfigure}{0.4\textwidth}
    \includegraphics[width=\textwidth]{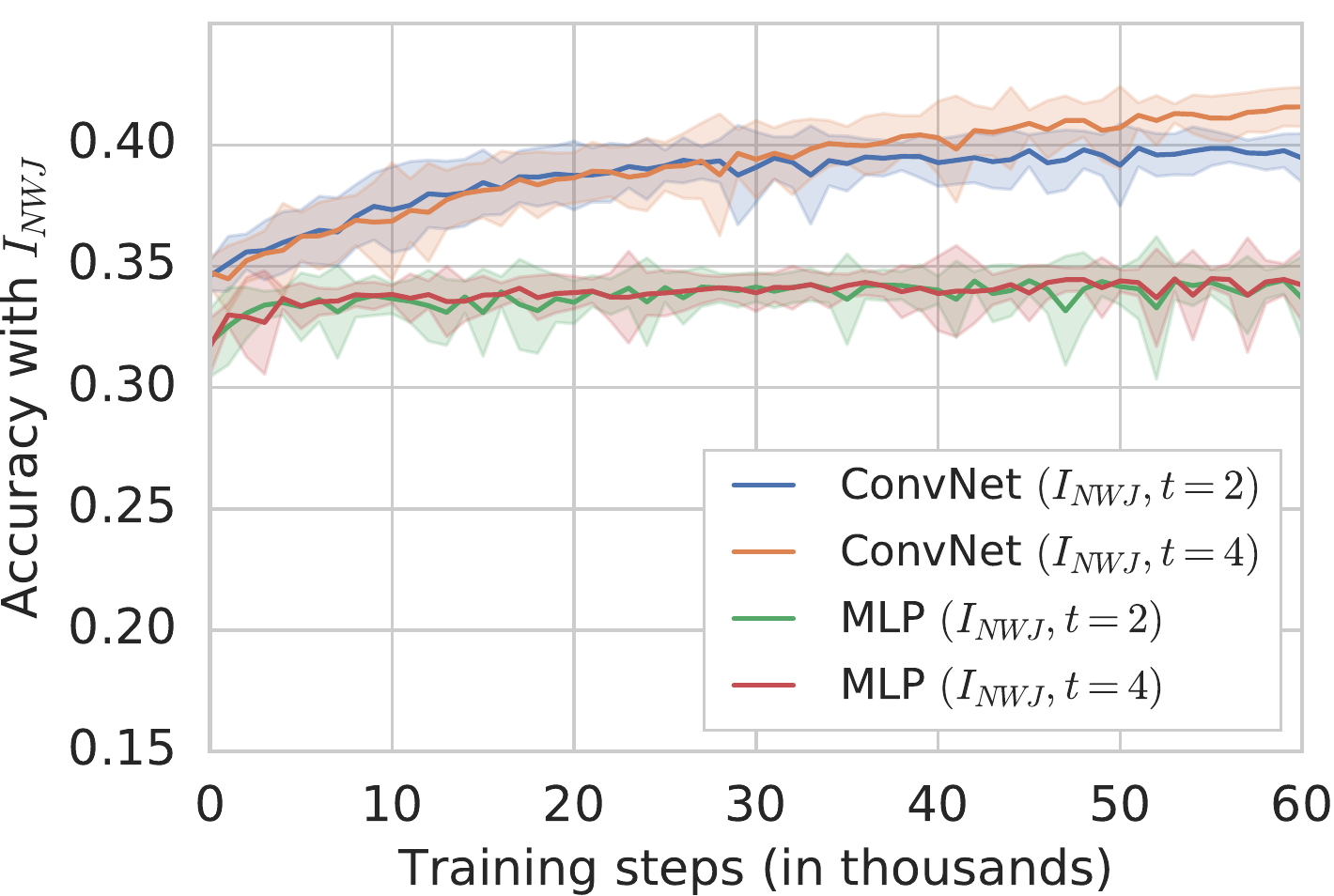}
  \end{subfigure} \\[5mm]
  \quad
  \begin{subfigure}{0.38\textwidth}
    \includegraphics[width=\textwidth]{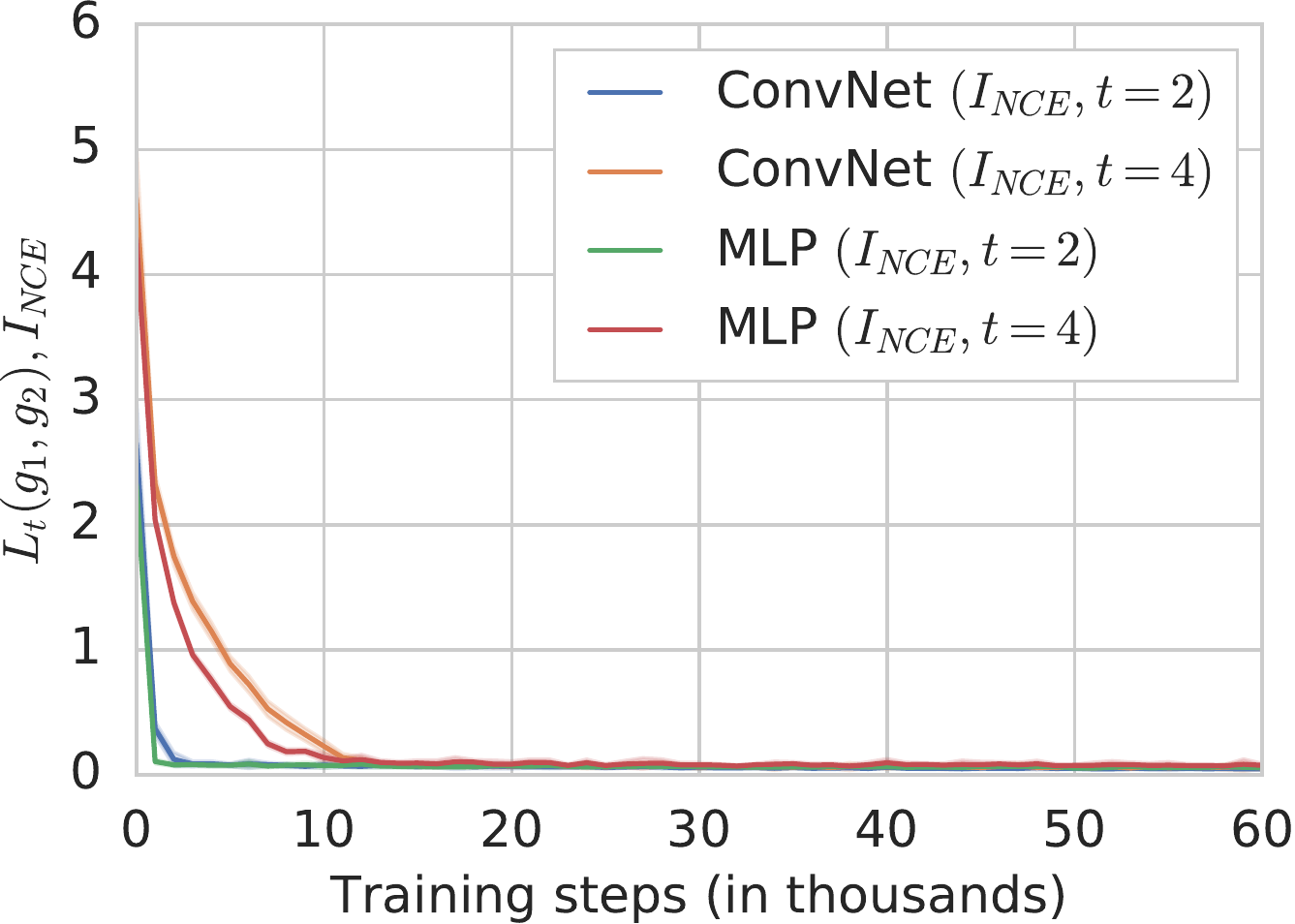}
  \end{subfigure} \qquad \quad
  \begin{subfigure}{0.38\textwidth}
    \includegraphics[width=\textwidth]{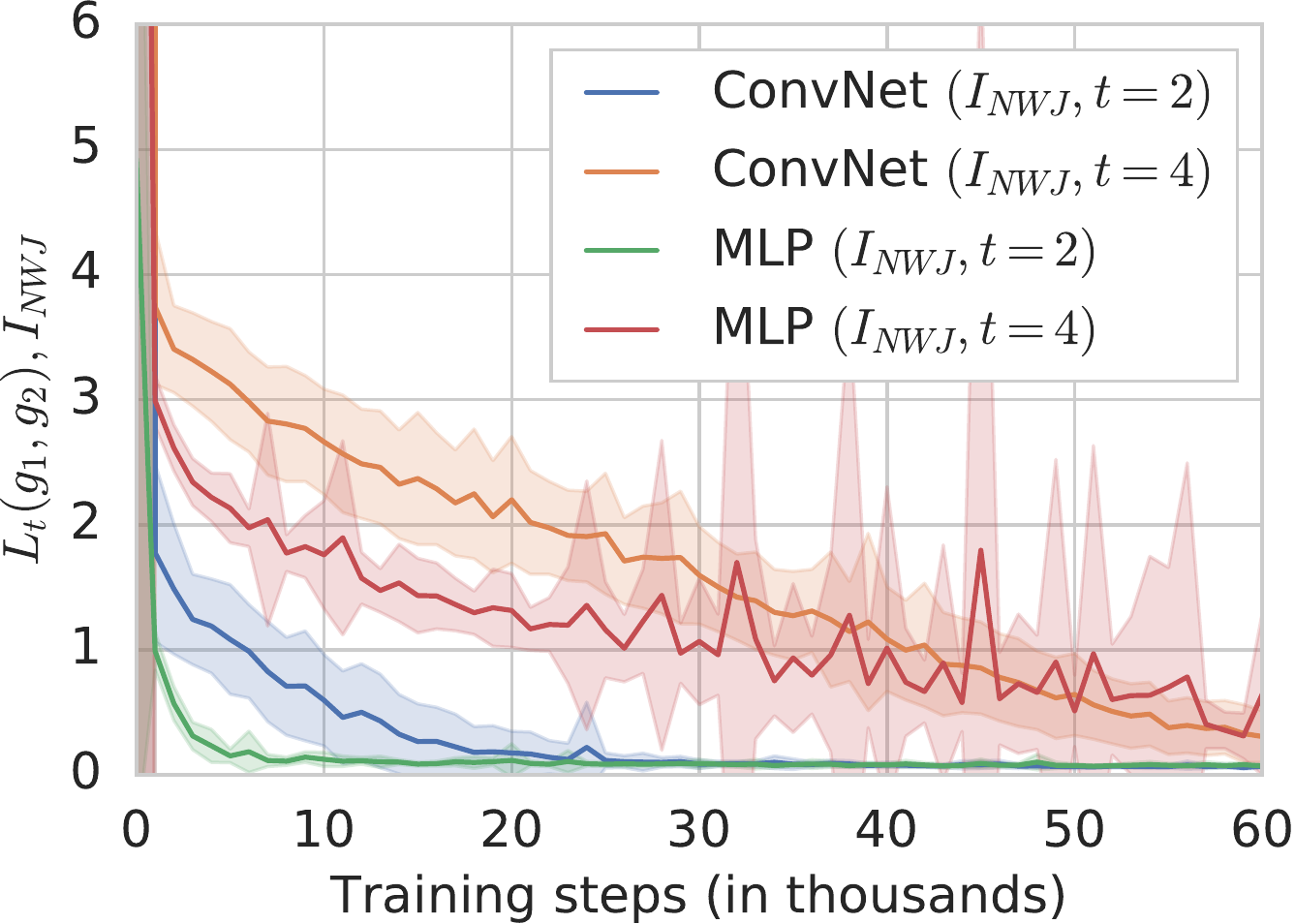}
  \end{subfigure}
  \caption{Downstream testing accuracy (top row) and testing loss value (bottom row) for different encoder architectures and MI estimators, using a bilinear critic trained to match a given target $\iest$ of $t$ (we minimize $L_t(g_1, g_2) = |\iest(g_1(X^{(1)}); g_1(X^{(2)})) - t|$). For a given estimator and $t$, ConvNet encoders clearly outperform MLP encoders in terms of downstream testing accuracy.}
  \label{fig:encoder_arch_cifar}
\end{figure}

\end{document}